\documentclass{article} % For LaTeX2e
\usepackage{iclr2027_conference,times}

\usepackage{amsmath,amsfonts,bm}

\def\eqref#1{equation~\ref{#1}}
\def\1{\bm{1}}

\DeclareMathAlphabet{\mathsfit}{\encodingdefault}{\sfdefault}{m}{sl}
\SetMathAlphabet{\mathsfit}{bold}{\encodingdefault}{\sfdefault}{bx}{n}

\usepackage{url}
\usepackage[english]{babel}

\usepackage{amsmath,amssymb,amsthm,mathtools}
\usepackage{graphicx}
\usepackage[protrusion=true,expansion=true]{microtype}
\usepackage{booktabs}
\newtheorem{proposition}{Proposition}

\usepackage[colorlinks=true, allcolors=blue]{hyperref}
\usepackage{subcaption}
\usepackage{wrapfig}
\usepackage{needspace}
\theoremstyle{plain}

\newtheorem*{proposition*}{Proposition}

\theoremstyle{remark}

\title{RepFlow: Reciprocal Supervision Improves Generation and Representation in Flow Models}

\author{
Weili Zeng\textsuperscript{1}\thanks{Email: \href{mailto:zwl666@sjtu.edu.cn}{zwl666@sjtu.edu.cn}.} \quad
Feng Tian\textsuperscript{1} \quad
Shengqi Liu\textsuperscript{1} \quad
Yichao Yan\textsuperscript{1}\thanks{Corresponding author: \href{mailto:yanyichao@sjtu.edu.cn}{yanyichao@sjtu.edu.cn}.} \\[0.5em]
\textsuperscript{1}Shanghai Jiao Tong University \\
}
\date{}

\iclrfinalcopy
\begin{document}
% Improve line breaking and discourage widows, orphans, and very short final lines.
\setlength{\emergencystretch}{1em}
\setlength{\parfillskip}{0pt plus 0.65\textwidth}
\clubpenalty=10000
\widowpenalty=10000
\displaywidowpenalty=10000

\maketitle
% Use a neutral arXiv preprint header rather than a conference-status header.
\pagestyle{plain}

\begin{abstract}
Generative models learn visual structure through denoising, yet their internal states are entangled with both noise level and network depth, making it difficult to obtain a stable visual representation from the generator itself. We introduce RepFlow, which learns such a representation from the generator's evolving computation and uses it to guide generation. Specifically, a separate timestep-free encoder is trained, through a timestep-conditioned predictor, to recover generator states across depths and noise levels from masked clean images. By excluding the reference noise and masked-out content from the encoder's input, we encourage the encoder to distill visual information that is recoverable from the visible image context and predictive of generator states. The representation learned from the generator's evolving states is then fed back to guide and improve the generator, whose updated states provide supervision for further representation learning, forming a reciprocal learning process. This reciprocal process improves multi-step generation and representation quality, as measured by frozen linear probing on ImageNet with latent-space SiT and pixel-space JiT, without an externally pretrained representation teacher. Across the two unconditional settings, FID decreases by $19.2$--$40.8\%$ relative to native training, while linear-probe accuracy improves by 6.80--10.04 percentage points over the best searched raw generator features. The learned representation also serves as a distributional metric for one-step JiT post-training, extending its role from instance-level alignment to distribution-level supervision.

\end{abstract}

\section{Introduction}
Generative modeling~\citep{ddpm,lipman2022flow,peebles2023scalable,sit} and representation learning~\citep{simclr,mae,ijepa} impose different demands on an image model. A generator must retain instance-specific detail to model the data distribution, whereas a representation learner is rewarded for discarding nuisance variation while preserving information useful for transfer. Both, however, must organize the same underlying visual structure. This shared requirement creates an opportunity to learn representations that are useful both for understanding images and for improving the process that generates them.

Prior work establishes that information can flow in both directions. Features extracted from pretrained generators support recognition and transfer~\citep{gabeur2026image,baranchuk2021label,tang2023emergent,stracke2025cleandift,ddae}, while pretrained visual representations can guide generative optimization and synthesis~\citep{repa,rae}. Self-representation alignment further shows that a generator can obtain useful guidance from its own internal states~\citep{sra}. We investigate how such guidance can be formed through a separate image representation learned alongside the generator. The question is how to turn the generator's evolving computation into a representation that can, in turn, guide its subsequent learning (Figure~\ref{intro_img2}).

\begin{figure}[t]
\centering
\includegraphics[width=\linewidth]{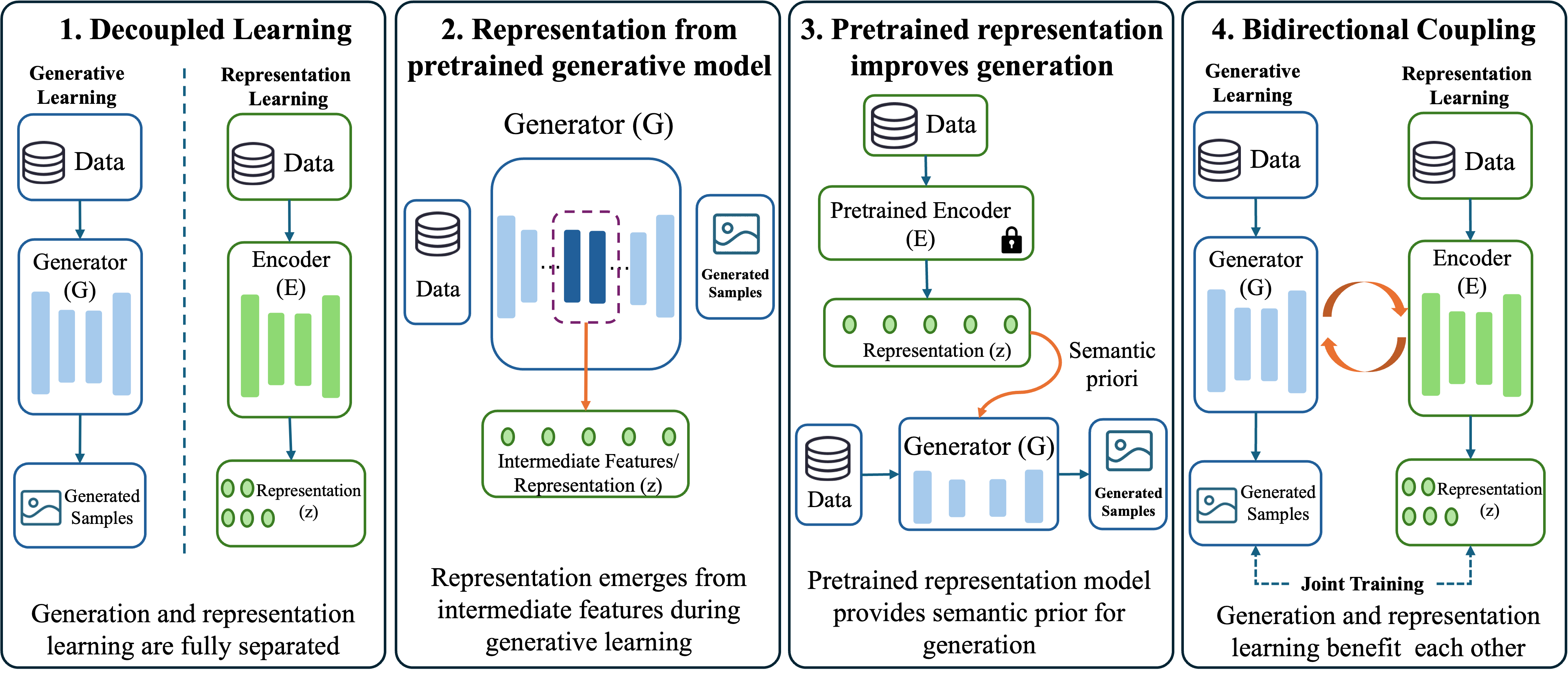}
\caption{Connections between generation and representation learning. Two common strategies extract features from a pretrained generator or use a pretrained representation to guide generation. RepFlow learns a clean-image representation from the generator's evolving states and uses it to shape subsequent generative learning.}
\label{intro_img2}
\end{figure}

The difficulty lies in the form of information available from a generator. A flow or diffusion model exposes a family of internal states indexed by noise level and network depth, each reflecting image content together with uncertainty and stage-specific denoising computation. A feature useful at one layer and timestep may behave differently elsewhere in the trajectory~\citep{baranchuk2021label,tang2023emergent,stracke2025cleandift,ddae}. These states nevertheless provide related observations of the same underlying image. Viewed as learning targets, their value depends on what an image encoder must learn to predict them. This perspective makes it possible to use the diversity of generative states as supervision for a single clean-image representation.

We introduce \textbf{RepFlow}, based on the idea that a generator can be guided by a representation learned from the predictable structure of its own computation. A separate student predicts generator states from several depths at a sampled timestep, given only a masked clean image. Since the student observes neither the reference noise nor the masked image content, reducing prediction error rewards information recoverable from visible context. Across training, the same encoder must support prediction across depths and noise levels, while timestep conditioning is confined to the predictor. The student thus learns through a family of contextual prediction tasks and produces a representation that can be extracted without selecting a generator layer or timestep.

This learned representation becomes part of the generator's own training signal. The student processes the full clean image and supplies an alignment target for a generator state. The returned target has been shaped by a different learning problem: predicting generative computation from restricted image context. It therefore provides a clean-image reference whose input does not depend on the current noise realization or denoising stage. As this reference guides the generator, the resulting updates change the states from which the student subsequently learns. RepFlow couples the two models through this evolving supervision, while retaining their separate parameters and the generator's native objective. The representation used to guide generation is learned within the same process, without an externally pretrained representation teacher.

On ImageNet, RepFlow improves both multi-step generation and frozen linear probing with latent-space SiT~\citep{sit} and pixel-space JiT~\citep{jit}. Predicting generator states already yields a student that outperforms the best searched raw generator features; reciprocal training further improves both the student and the generator. In unconditional experiments, RepFlow reduces FID from 32.83 to 19.43 on SiT and from 35.58 to 28.76 on JiT. Frozen linear-probe accuracy improves by 10.04 and 6.80 percentage points, respectively, over the best searched raw features of each generator. Class-conditional results are competitive with REPA~\citep{repa} and SRA~\citep{sra}. Controlled ablations favor generator-state targets over masked-pixel reconstruction and multi-depth supervision over a single-depth target, supporting the role of generative states as a source of predictive supervision.

The same representation also serves as a distributional metric for one-step JiT post-training~\citep{yang2026representation}: under matched initialization and optimization budgets, replacing a pretrained MAE feature with the RepFlow student lowers FID from 6.34 to 3.14. Its use thus extends from instance-level supervision during reciprocal learning to distribution-level supervision afterward.

Our contributions are:
\begin{itemize}
    \item We introduce RepFlow, a reciprocal learning framework in which the representation guiding a flow generator is learned from its own evolving computation through a separate predictive encoder, without an externally pretrained representation teacher.
    \item We show that stage-dependent generative states can supervise a timestep-free encoder whose features outperform the best searched raw generator features in linear probing.
    \item We demonstrate improvements in multi-step generation across latent-space SiT and pixel-space JiT, and show through matched one-step JiT post-training that the learned representation also provides effective distribution-level supervision.
\end{itemize}

\section{Preliminaries}

\subsection{Flow-based Generative Modeling}

Let $x_0 \sim p_{\mathrm{data}}$ denote a clean sample in the
generator's modeling space and $x_1 \sim p_{\mathrm{ref}}$ an
independent noise sample. In our experiments, $x_0$ consists of VAE
latents for SiT and image pixels for JiT. We define an interpolation
\begin{equation}
    x_t = \alpha_t x_0 + \sigma_t x_1,
    \qquad t \in [0,1],
\end{equation}
where $(\alpha_0,\sigma_0)=(1,0)$ and
$(\alpha_1,\sigma_1)=(0,1)$, so that $t=0$ corresponds to data and
$t=1$ to noise. The associated conditional velocity is
\begin{equation}
    u_t(x_0,x_1)
    =
    \frac{\mathrm{d}x_t}{\mathrm{d}t}
    =
    \dot{\alpha}_t x_0+\dot{\sigma}_t x_1.
\end{equation}

SiT~\citep{sit} directly parameterizes a velocity field
$v_\theta(x_t,t)$ and minimizes
\begin{equation}
    \mathcal{L}_{\mathrm{FM}}
    =
    \mathbb{E}_{x_0,x_1,t}
    \left[
        \left\|v_\theta(x_t,t)-u_t(x_0,x_1)\right\|_2^2
    \right].
\end{equation}

JiT~\citep{jit} directly predicts clean data,
$\hat{x}_{0,\theta}=G_\theta(x_t,t)$. For the linear path
$\alpha_t=1-t$ and $\sigma_t=t$ used in our experiments, its native
velocity-space loss can be written as a weighted reconstruction
objective:
\begin{equation}
    \mathcal{L}_{\mathrm{JiT}}
    =
    \mathbb{E}_{x_0,x_1,t}
    \left[
        w(t)
        \left\|G_\theta(x_t,t)-x_0\right\|_2^2
    \right],
\end{equation}
where $w(t)=\max(t,t_{\min})^{-2}$ and $t_{\min}$ is the
denominator-clipping threshold specified in Appendix~\ref{app:exp}.
Expectations use each backbone's native timestep sampling
distribution. We write $G_\theta$ for either generative backbone,
$\mathcal{L}_{\mathrm{gen}}$ for its native objective, and
$h_\ell^G(x_t,t)$ for its internal state at layer $\ell$.

\subsection{Predictive Representation Learning}

Masked predictive learning trains an encoder by predicting targets
from partial image observations~\citep{data2vec,ijepa,mae}. Let $x$ denote
the clean RGB image corresponding to $x_0$, and let
$x^{\mathrm{mask}}$ be a masked view. The encoder $U_\phi$ produces
\begin{equation}
    u^{\mathrm{mask}}(x)
    =
    U_\phi\!\left(x^{\mathrm{mask}}\right).
\end{equation}

Given a target feature $y$ and an optional conditioning variable $c$,
a predictor $P_\psi$ is trained through
\begin{equation}
    \mathcal{L}_{\mathrm{pred}}
    =
    \mathbb{E}
    \left[
        \ell\!\left(
            P_\psi\!\left(u^{\mathrm{mask}}(x),c\right),
            \operatorname{sg}(y)
        \right)
    \right],
\end{equation}
where $\ell$ is a prediction loss and $\operatorname{sg}(\cdot)$
stops gradients through the target in this objective. The
target-producing model may still be updated by other objectives.
In RepFlow, $y$ is constructed from generator states and $c$ is the
generative timestep, supplied only to the predictor. This keeps the
encoder's input independent of the target's generative stage.

\section{Method}
\label{sec:method}

\subsection{From Generative States to Predictive Targets}
\label{sec:method:overview}

RepFlow learns the representation that guides a generator from the
predictable structure of that generator's own computation. Internal
states at different depths and noise levels define related prediction
tasks about the same image (Figure~\ref{intro_img1}). We train a
separate image encoder through these tasks, then use its full-image
features to guide the generator. The resulting updates change the
states that supervise subsequent representation learning.

\begin{figure}[t]
\centering
\includegraphics[width=\linewidth]{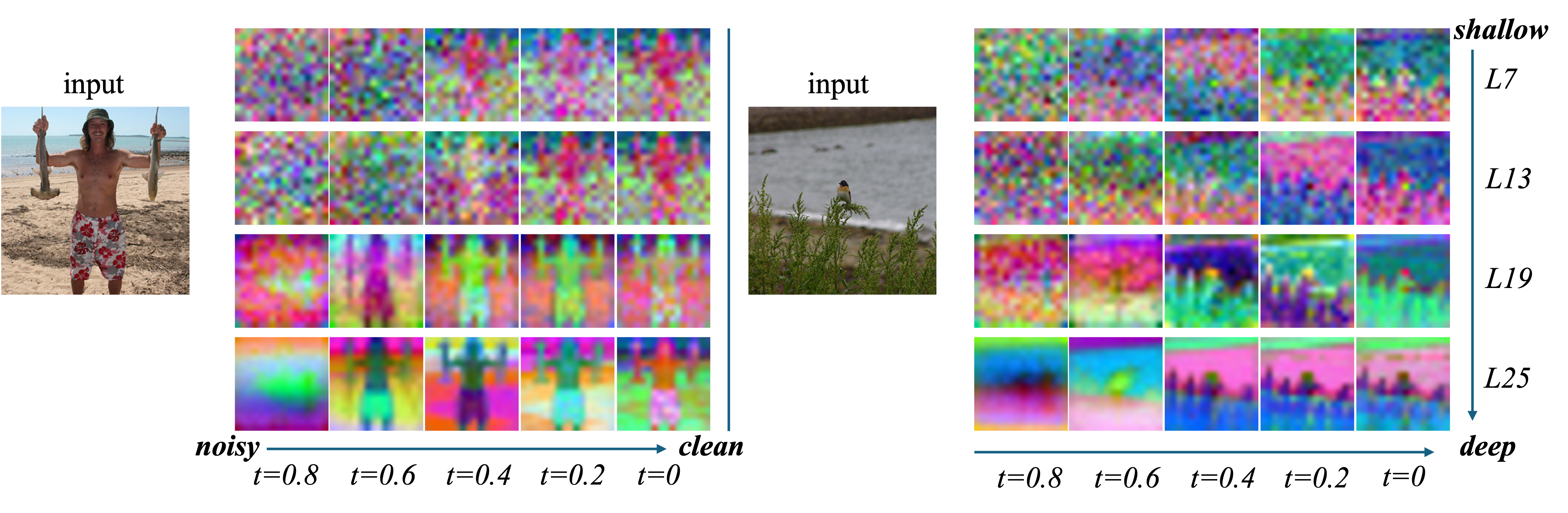}
\caption{Generator states across network depth and noise level.
The same image induces different feature maps at different stages
of generative computation. RepFlow uses these states as prediction
targets for a clean-image encoder.}
\label{intro_img1}
\end{figure}

The predictive task restricts what the encoder can observe.
Let $\mathcal{M}$ denote the masked spatial positions, and let
$x^{\mathrm{mask}}$ contain the visible image patches and their
positions. For a fixed generator, let $Y_{t,i}$ collect normalized
generator features from selected depths at a masked position $i$.
Under a squared-error surrogate, the Bayes-optimal prediction
satisfies
\begin{equation}
    F_i^\star(x^{\mathrm{mask}},t)
    =
    \mathbb{E}\!\left[
        Y_{t,i}\mid x^{\mathrm{mask}},t
    \right],
    \qquad i\in\mathcal{M}.
\end{equation}
Visible context is therefore useful insofar as it explains target
variation beyond the timestep alone. Only the variation unresolved
by this conditioning contributes to irreducible squared prediction
error. Supplying $t$ to the predictor allows prediction to vary
with the generative stage while the encoder depends only on image
observations. Appendix~\ref{sec:theory} gives the corresponding
risk decomposition. This squared-loss analysis explains the
prediction problem; the implemented objective uses Smooth L1,
as specified below.

\begin{figure}[t]
\centering
\includegraphics[width=\linewidth]{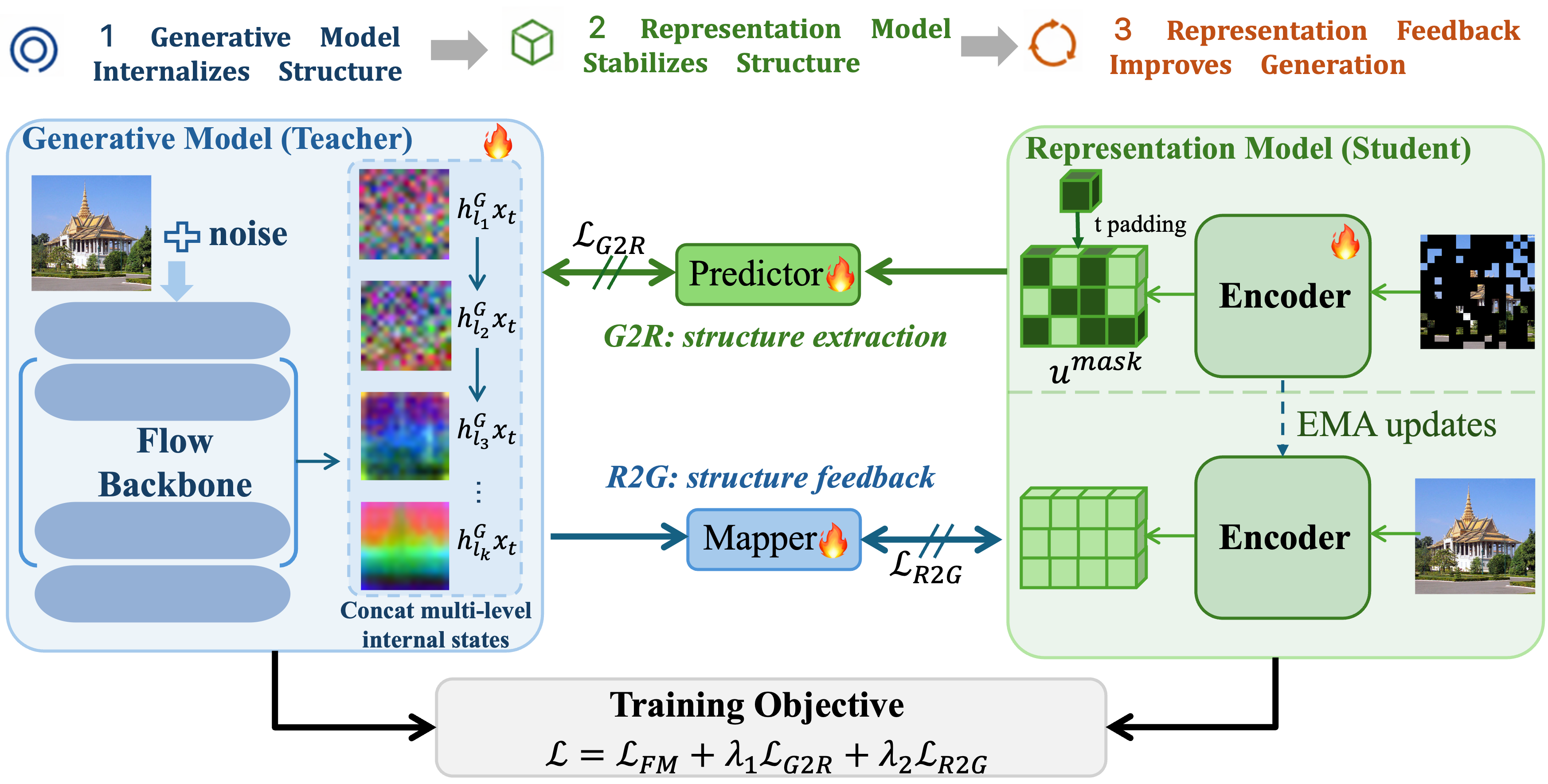}
\caption{RepFlow. Generator states supervise masked prediction
through G2R, while full-image features from an EMA encoder guide
an early generator state through R2G. Stopped targets separate the
gradient paths, and the models interact through the targets used
in subsequent updates. The native generator objective is
$\mathcal{L}_{\mathrm{FM}}$ for SiT and
$\mathcal{L}_{\mathrm{JiT}}$ for JiT. Flames indicate trainable
modules; double slashes indicate stopped gradients.}
\label{framework}
\end{figure}

\subsection{Learning the Supervisory Representation}

The generator $G_\theta$ and encoder $U_\phi$ have separate
parameters. We use $x$ for the clean RGB image and $x_0$ for its
representation in the generator's modeling space: a VAE latent
for SiT and pixels for JiT. The generator receives the
corresponding noised input $x_t$, while the encoder receives a
masked view of $x$. In both instantiations, generator features
and full-image encoder features share an $N$-position spatial grid.

Let $\mathcal{D}=\{\ell_1,\ldots,\ell_K\}$ be the selected
generator depths. At each spatial position $i$, the predictive
target is
\begin{equation}
    Y_{t,i}
    =
    \operatorname{Concat}_{\ell\in\mathcal{D}}
    \left[
        \operatorname{LN}
        \left(h_{\ell,i}^G(x_t,t)\right)
    \right],
\end{equation}
where $\operatorname{LN}$ denotes channel-wise layer normalization
at each token. Normalization places features from different depths
on a common scale before concatenation. Class conditioning, when
present, is implicit in the generator states; the encoder receives
no class label, reference noise, or timestep.

The encoder produces a masked-view code
$u^{\mathrm{mask}}_\phi(x)$. A timestep-conditioned predictor
reconstructs a full spatial grid of target predictions,
$\hat{Y}_t=P_\psi(u^{\mathrm{mask}}_\phi(x),t)$.
The encoder features supplied to this predictor are specified for
each setting in Appendix~\ref{app:exp}. We supervise only masked
positions:
\begin{equation}
    \mathcal{L}_{\mathrm{G2R}}
    =
    \mathbb{E}\!\left[
        \frac{1}{|\mathcal{M}|}
        \sum_{i\in\mathcal{M}}
        \ell_{\mathrm{SL1}}
        \left(
            \hat{Y}_{t,i},
            \operatorname{sg}(Y_{t,i})
        \right)
    \right],
\end{equation}
where $\ell_{\mathrm{SL1}}$ is Smooth L1 with $\beta=1$, averaged
over feature channels. Expectations include the sampled data,
reference noise, timestep, and mask. The resulting loss updates
only the encoder and predictor.

Across training, the encoder supports prediction at several
depths and sampled noise levels from image context alone.
The predictor handles timestep-dependent decoding, while the
encoder learns the information useful across these prediction
tasks. The generator continues to evolve under the training
objective below, so the student's targets evolve with it.

\subsection{Reciprocal Training}

The learned representation supplies an auxiliary target for
the generator (Figure~\ref{framework}). An EMA encoder
$U_{\bar\phi}$ processes the full clean image. For each position
$i$, we define its feature and the projected generator feature as
\begin{equation}
    V_i(x)=\left[U_{\bar\phi}(x)\right]_i,
    \qquad
    Z_i(x_t,t)
    =
    g_\omega\!\left(h_{\ell_s,i}^G(x_t,t)\right),
\end{equation}
where $\ell_s$ is an early generator layer and $g_\omega$ maps
its features to the encoder's feature dimension. Placing the
alignment at an early layer allows the resulting updates to
affect the states processed by subsequent generator blocks.

We use token-wise cosine alignment~\citep{repa}, with the
supervisory target learned through the predictive task above.
Writing $\nu(a)=a/\|a\|_2$ for feature normalization,
\begin{equation}
    \mathcal{L}_{\mathrm{R2G}}
    =
    \mathbb{E}\!\left[
        \frac{1}{N}\sum_{i=1}^{N}
        \left(
            1-
            \left\langle
                \nu(Z_i),
                \operatorname{sg}\!\left(\nu(V_i)\right)
            \right\rangle
        \right)
    \right].
\end{equation}
The target is evaluated on the full image and is independent
of the current noise realization and timestep. Its parameters
are nevertheless shaped by predicting generator states from
masked context.

The complete training objective is
\begin{equation}
    \mathcal{L}_{\mathrm{joint}}
    =
    \mathcal{L}_{\mathrm{gen}}
    +
    \lambda_1\mathcal{L}_{\mathrm{G2R}}
    +
    \lambda_2\mathcal{L}_{\mathrm{R2G}},
\end{equation}
where $\mathcal{L}_{\mathrm{gen}}$ is the native SiT or JiT
objective. The stopped targets give
\begin{equation}
    \begin{aligned}
    \nabla_\theta\mathcal{L}_{\mathrm{joint}}
    &=
    \nabla_\theta\mathcal{L}_{\mathrm{gen}}
    +
    \lambda_2\nabla_\theta\mathcal{L}_{\mathrm{R2G}},\qquad
    \nabla_\phi\mathcal{L}_{\mathrm{joint}}
    =
    \lambda_1\nabla_\phi\mathcal{L}_{\mathrm{G2R}}.
    \end{aligned}
\end{equation}
The predictor and projection head are optimized through G2R
and R2G, respectively. After each online update, we update
$\bar\phi\leftarrow m\bar\phi+(1-m)\phi$.
We use $\lambda_1=1$, $\lambda_2=0.1$, and $m=0.9999$;
architectural and optimization settings are given in
Appendix~\ref{app:exp}.

Reciprocity arises through the supervision available at the
next update. Updating $\theta$ changes the states that the
student must predict; updating $\phi$ changes the EMA features
that guide subsequent generator updates. The encoder thereby
turns the generator's computation into a clean-image training
target, while its own prediction task changes as the generator
learns. Appendix~\ref{sec:theory} formalizes these update
dependencies.

For representation evaluation, we average the final-layer tokens
from $U_{\bar\phi}(x)$. This requires a single full-image encoder
pass, with no generator, predictor, or layer--timestep selection.

\subsection{Representation-guided One-step Post-training}

Starting from a jointly trained JiT checkpoint, we freeze the
EMA encoder and reuse its feature space for one-step distribution
matching. A terminal-noise input produces
$\hat{x}=G_\theta(x_1,y,1)$, where $y$ denotes the class condition.
Following FD-loss~\citep{yang2026representation}, we define
$r(x)=\operatorname{Pool}(U_{\bar\phi}(x))$
and compute the real feature mean and covariance
$(\mu_r,\Sigma_r)$ once from $\mathcal{D}_{\mathrm{train}}$.
Generated statistics $(\mu_g,\Sigma_g)$ are tracked through
an EMA of first and second moments. The objective is
\begin{equation}
    \begin{aligned}
    \mathcal{L}_{\mathrm{FD}}
    &=
    \|\mu_r-\mu_g\|_2^2
    +d_{\mathrm{cov}}(\Sigma_r,\Sigma_g),\\
    d_{\mathrm{cov}}(\Sigma_r,\Sigma_g)
    &=
    \operatorname{Tr}\!\left(
        \Sigma_r+\Sigma_g
        -2\left(
            \Sigma_r^{1/2}\Sigma_g\Sigma_r^{1/2}
        \right)^{1/2}
    \right).
    \end{aligned}
\end{equation}
Real statistics and historical generated moments remain fixed
during backpropagation; gradients through the current
generated-image features update the generator while the encoder
parameters remain frozen. This reuses the representation learned
by reciprocal training as a distributional metric, extending its
role beyond instance-level alignment.

\section{Experiments}

Our experiments test whether a representation learned from a generator can improve the generator that supplies its training targets. We first evaluate sample quality and frozen representations under reciprocal training, then use controlled ablations to examine how feedback and target choice affect both. Class-conditional generation and one-step post-training test the utility of the learned supervision in the presence of class labels and under a different training objective.

\subsection{Experimental Setup}

We instantiate RepFlow with latent-space SiT~\citep{sit} and pixel-space JiT~\citep{jit} on ImageNet~\citep{imagenet}. Each native/RepFlow comparison matches the generative backbone, data, epoch budget, sampler, and evaluator. We report FID over 50,000 generated samples (FID-50K)~\citep{fid} and ImageNet top-1 accuracy from frozen linear probing. All of our generation evaluations use $\mathrm{CFG}=1$.

RepFlow-B and RepFlow-L denote the use of a separate ViT-B or ViT-L student~\citep{vit}, respectively; the generator backbone is unchanged. For the raw-generator representation baseline, we search layers and timesteps and select block 21 at implementation timestep $\tau=0.7$ for SiT and $\tau=0.5$ for JiT, where $\tau=1-t$. RepFlow uses the final-layer, full-view EMA student feature in a single clean-image forward pass.

Class-conditional comparisons include REPA~\citep{repa}, which uses a pretrained DINOv2 representation target, and SRA~\citep{sra}, which aligns states within the generator. Appendix~\ref{app:exp} details the architectures, optimization, evaluation protocols, and computational cost. Training progress below is compared in epochs; the student adds computational cost per update.

\subsection{Unconditional Generation and Representation}

RepFlow improves generation on both backbones (Figure~\ref{fig:uncond}). After 600 epochs, RepFlow-L reduces SiT FID from 32.83 to 19.43 (40.8\%) and JiT FID from 35.58 to 28.76 (19.2\%). RepFlow-B reaches 20.12 and 28.80, respectively, so most of the generative gain is already present with the smaller student. The benefit also appears before the final checkpoint: RepFlow-L reaches the native 600-epoch FID after approximately 143 epochs on SiT and 114 epochs on JiT.

The same training runs yield stronger frozen representations (Figure~\ref{fig:uncond_representation}). On SiT, linear-probe accuracy increases from 58.02\% for the best searched raw generator feature to 64.70\% with RepFlow-B and 68.06\% with RepFlow-L. On JiT, the corresponding accuracies are 61.79\%, 67.00\%, and 68.59\%. RepFlow-L therefore improves on the best searched raw features by 10.04 and 6.80 percentage points while requiring no layer--timestep selection at inference. Increasing student capacity improves linear probing on both backbones, although JiT FID changes little between the two student sizes. Recognition accuracy and effectiveness as generative supervision thus need not improve to the same extent.

\begin{figure}[t]
\centering
\begin{minipage}[t]{0.49\linewidth}
    \vspace{0pt}\centering
    \includegraphics[width=\linewidth]{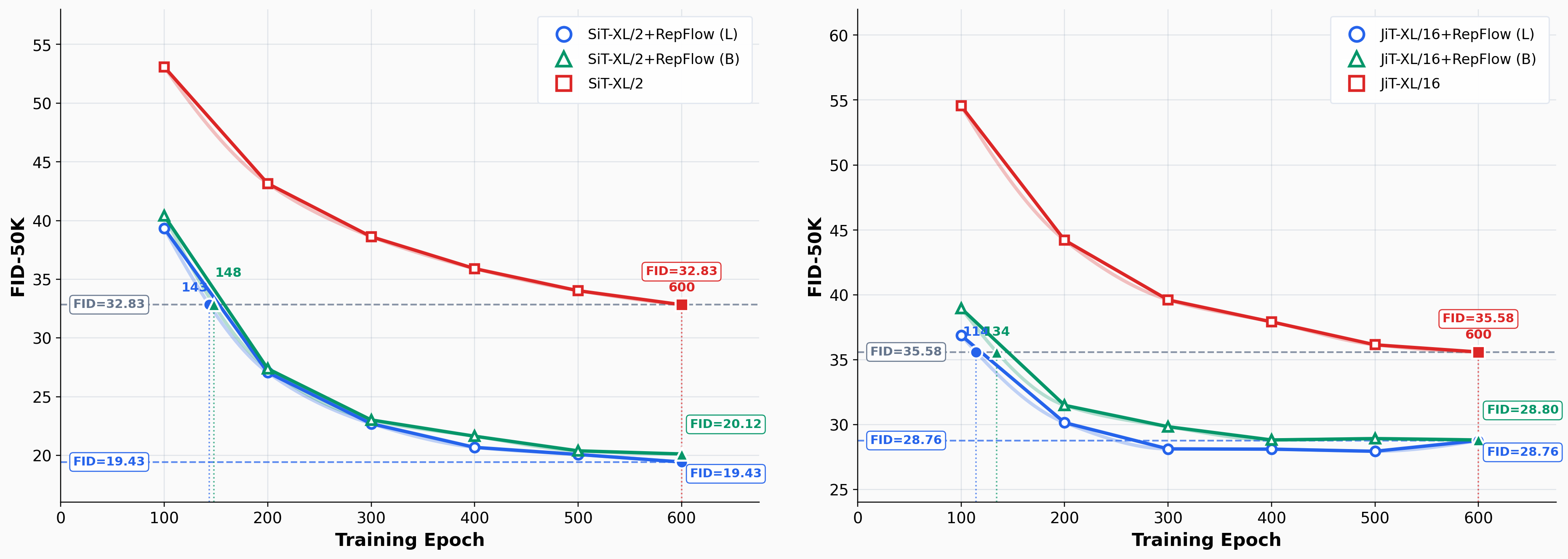}
    \vspace{-1mm}
    \makebox[\linewidth]{\hspace{0.12\linewidth}\textbf{(a) SiT}\hfill\textbf{(b) JiT}\hspace{0.12\linewidth}}
    \caption{Unconditional ImageNet FID-50K across training epochs. Annotations mark when RepFlow reaches the native 600-epoch FID.}
    \label{fig:uncond}
\end{minipage}
\hfill
\begin{minipage}[t]{0.49\linewidth}
    \vspace{0pt}\centering
    \includegraphics[width=0.49\linewidth]{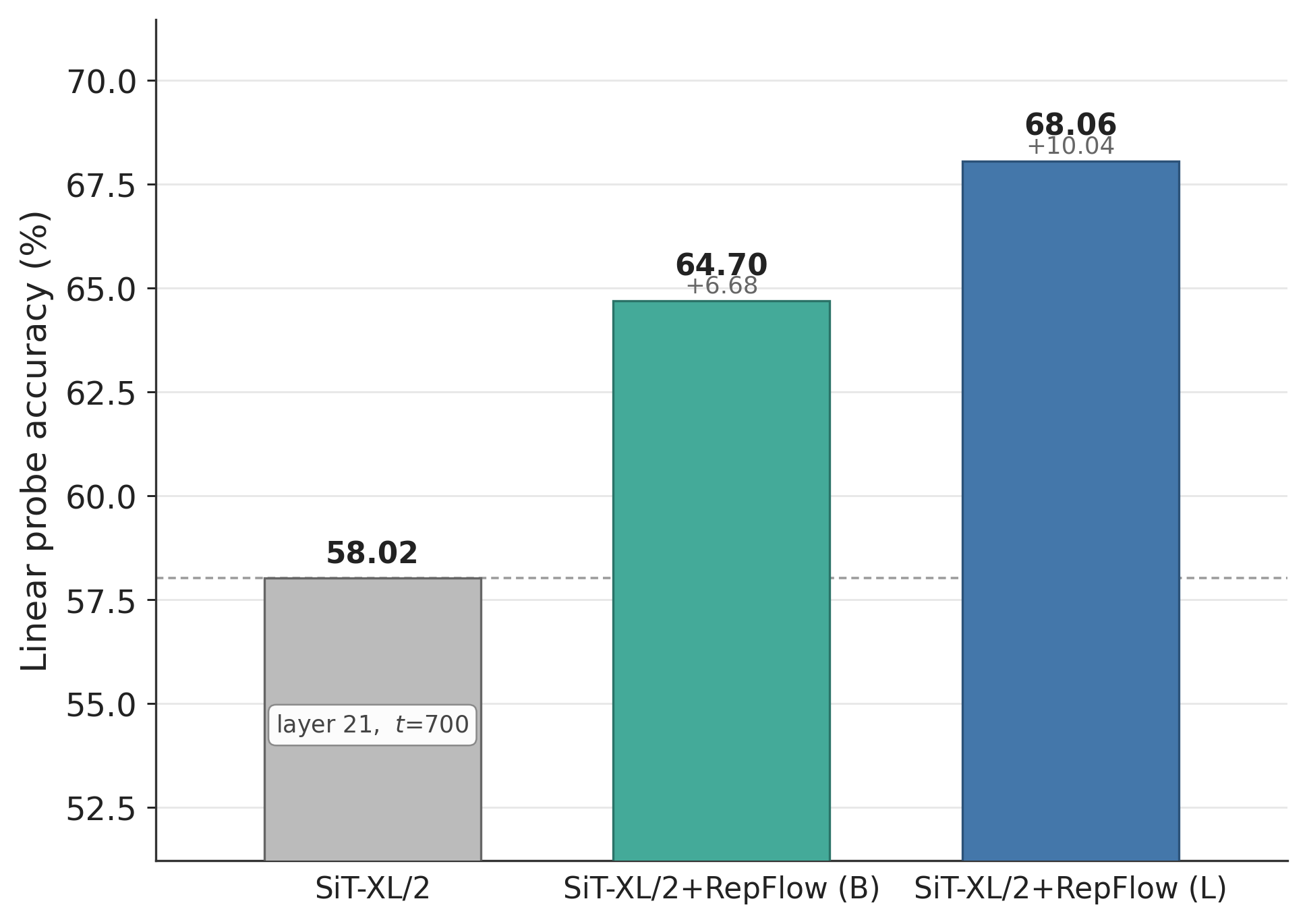}\hfill
    \includegraphics[width=0.49\linewidth]{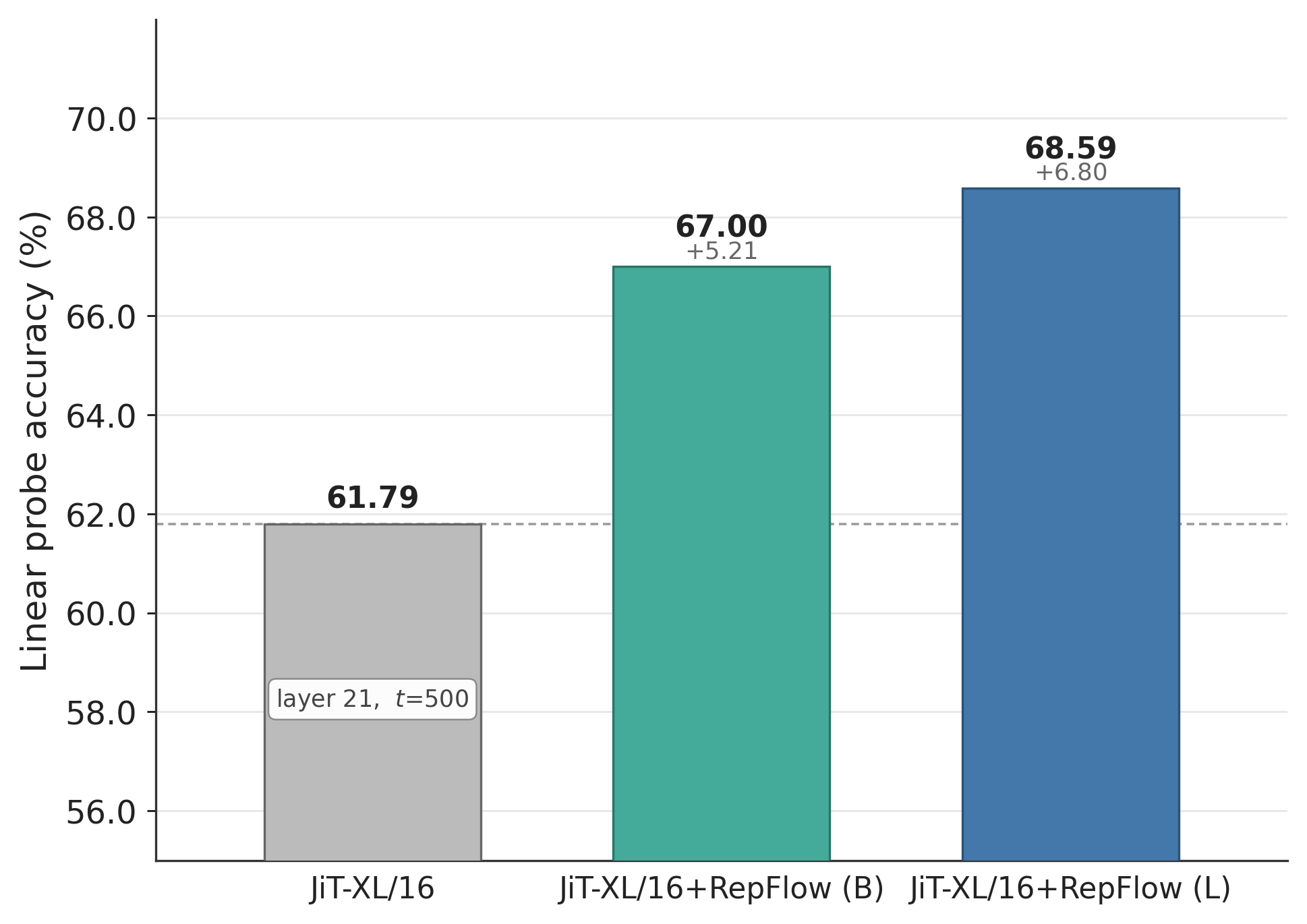}
    \vspace{-1mm}
    \makebox[\linewidth]{\hspace{0.12\linewidth}\textbf{(a) SiT}\hfill\textbf{(b) JiT}\hspace{0.12\linewidth}}
    \caption{Frozen ImageNet linear probing. Raw generator features use the best searched layer and timestep; RepFlow uses the final-layer, full-view student feature.}
    \label{fig:uncond_representation}
\end{minipage}
\vspace{-2mm}
\end{figure}

\subsection{Analyzing Reciprocal Supervision}

\begin{wraptable}{r}{0.55\textwidth}
\vspace{-0.8\baselineskip}
\centering
\caption{Unconditional ImageNet ablations at 600 epochs. LP denotes linear-probe accuracy (\%).}
\label{tab:reciprocal_ablations}
\scriptsize
\setlength{\tabcolsep}{2.3pt}
\renewcommand{\arraystretch}{0.96}
\begin{tabular}{llcccc}
\toprule
& & \multicolumn{2}{c}{SiT} & \multicolumn{2}{c}{JiT} \\
\cmidrule(lr){3-4}\cmidrule(lr){5-6}
Component & Variant & FID $\downarrow$ & LP $\uparrow$ & FID $\downarrow$ & LP $\uparrow$ \\
\midrule
Reciprocal path & Native / raw state & 32.83 & 58.02 & 35.58 & 61.79 \\
& G2R only & 32.83 & 65.35 & 35.58 & 66.19 \\
& G2R+R2G & \textbf{19.43} & \textbf{68.06} & \textbf{28.76} & \textbf{68.59} \\
\midrule
G2R target & Pixel reconstruction & 25.23 & 57.23 & 31.56 & 59.88 \\
& Generator states & \textbf{19.43} & \textbf{68.06} & \textbf{28.76} & \textbf{68.59} \\
\midrule
Target depth & Single level & 20.71 & 65.26 & 30.44 & 67.17 \\
& Multi-level & \textbf{19.43} & \textbf{68.06} & \textbf{28.76} & \textbf{68.59} \\
\bottomrule
\end{tabular}
\vspace{-0.8\baselineskip}
\end{wraptable}

Table~\ref{tab:reciprocal_ablations} evaluates all student variants with ViT-L. For linear probing, Native uses its best searched raw generator feature, whereas the remaining variants use the frozen student. The single-level targets are block 19 for SiT and block 20 for JiT; the multi-level target set is $\mathcal{D}=\{5,12,19,26\}$.

Table~\ref{tab:reciprocal_ablations} first isolates the effect of returning the learned representation to the generator. Disabling R2G ($\lambda_2=0$) leaves the generator's native update unchanged because G2R uses stopped generator targets. Under matched initialization and stochasticity, Native and G2R-only therefore share the same generator trajectory. Even in this setting, masked prediction raises linear-probe accuracy from 58.02\% to 65.35\% on SiT and from 61.79\% to 66.19\% on JiT. Learning from the generator's evolving states thus produces a stronger representation than reading out its best searched raw feature.

Enabling feedback improves both models: FID decreases by 13.40 on SiT and 6.82 on JiT, while student accuracy increases by a further 2.71 and 2.40 percentage points. R2G supplies no direct gradient to the student, so its influence on representation learning passes through the generator states used as G2R targets. The additional student gains are therefore consistent with feedback improving the supervision available for subsequent representation learning.

Target choice also affects both outcomes. We replace generator-state prediction with MAE-style masked-pixel reconstruction while retaining the student, masking ratio, R2G branch, data, and training budget. Pixel reconstruction itself improves native FID, reaching 25.23 on SiT and 31.56 on JiT. Predicting generator states further reduces these values to 19.43 and 28.76, while raising student accuracy from 57.23\% to 68.06\% and from 59.88\% to 68.59\%, respectively. This control shows that the target used to train the student affects both the representation it acquires and the quality of the generative supervision it supplies.

Supervision across depth provides a further gain. Relative to the strongest evaluated single-depth target, multi-level supervision lowers FID by 1.28 on SiT and 1.68 on JiT and improves linear-probe accuracy by 2.80 and 1.42 percentage points. These results favor learning from several stages of generator computation in the tested settings, while retaining a single student representation at inference.

\subsection{Class-Conditional Generation}

\begin{wraptable}{r}{0.52\textwidth}
\vspace{-0.8\baselineskip}
\centering
\caption{Class-conditional ImageNet at 600 epochs. Bold/underline denote the best/second-best results per backbone; ties share formatting.}
\label{tab:conditional_600}
\scriptsize
\setlength{\tabcolsep}{2.8pt}
\renewcommand{\arraystretch}{0.96}
\begin{tabular}{lccccc}
\toprule
Method & FID$\downarrow$ & sFID$\downarrow$ & IS$\uparrow$ & Pre.$\uparrow$ & Rec.$\uparrow$ \\
\midrule
SiT & 9.62 & 5.31 & 123.0 & 0.67 & 0.68 \\
SiT + SRA & \underline{6.77} & \textbf{4.91} & 142.0 & \underline{0.69} & 0.68 \\
SiT + RepFlow & 6.99 & 5.17 & \underline{147.9} & 0.68 & \textbf{0.70} \\
SiT + REPA & \textbf{6.02} & \underline{5.05} & \textbf{159.3} & \textbf{0.70} & \underline{0.69} \\
\midrule
JiT & 16.80 & \underline{6.93} & 93.7 & 0.61 & \underline{0.67} \\
JiT + SRA & 14.24 & 7.04 & 101.8 & \underline{0.63} & \underline{0.67} \\
JiT + RepFlow & \underline{12.65} & \textbf{6.52} & \underline{116.4} & \underline{0.63} & \textbf{0.68} \\
JiT + REPA & \textbf{12.01} & 7.93 & \textbf{122.0} & \textbf{0.64} & \underline{0.67} \\
\bottomrule
\end{tabular}
\vspace{-0.8\baselineskip}
\end{wraptable}

RepFlow also improves generation when class labels already provide semantic conditioning (Table~\ref{tab:conditional_600} and Figure~\ref{fig:fid_comparison}). Table~\ref{tab:conditional_600} reports FID, spatial FID (sFID), IS, precision (Pre.), and recall (Rec.). RepFlow and SRA use no externally pretrained representation teacher, whereas REPA uses pretrained DINOv2. At 600 epochs, RepFlow reduces SiT FID from 9.62 to 6.99 (27.3\%) and JiT FID from 16.80 to 12.65 (24.7\%). The learned image-level supervision therefore remains useful in the presence of class conditioning.

Against SRA, RepFlow has slightly higher SiT FID (6.99 versus 6.77), with higher IS and recall; on JiT, it achieves lower FID and sFID and higher IS and recall. REPA retains lower FID on both backbones, with gaps of 0.97 on SiT and 0.64 on JiT, while RepFlow achieves higher recall. These comparisons place the representation learned within generative training close to pretrained-encoder guidance in FID, with different tradeoffs across the other metrics.

\begin{figure}[!ht]
\centering
\begin{subfigure}{0.48\linewidth}
    \centering
    \includegraphics[width=0.9\linewidth]{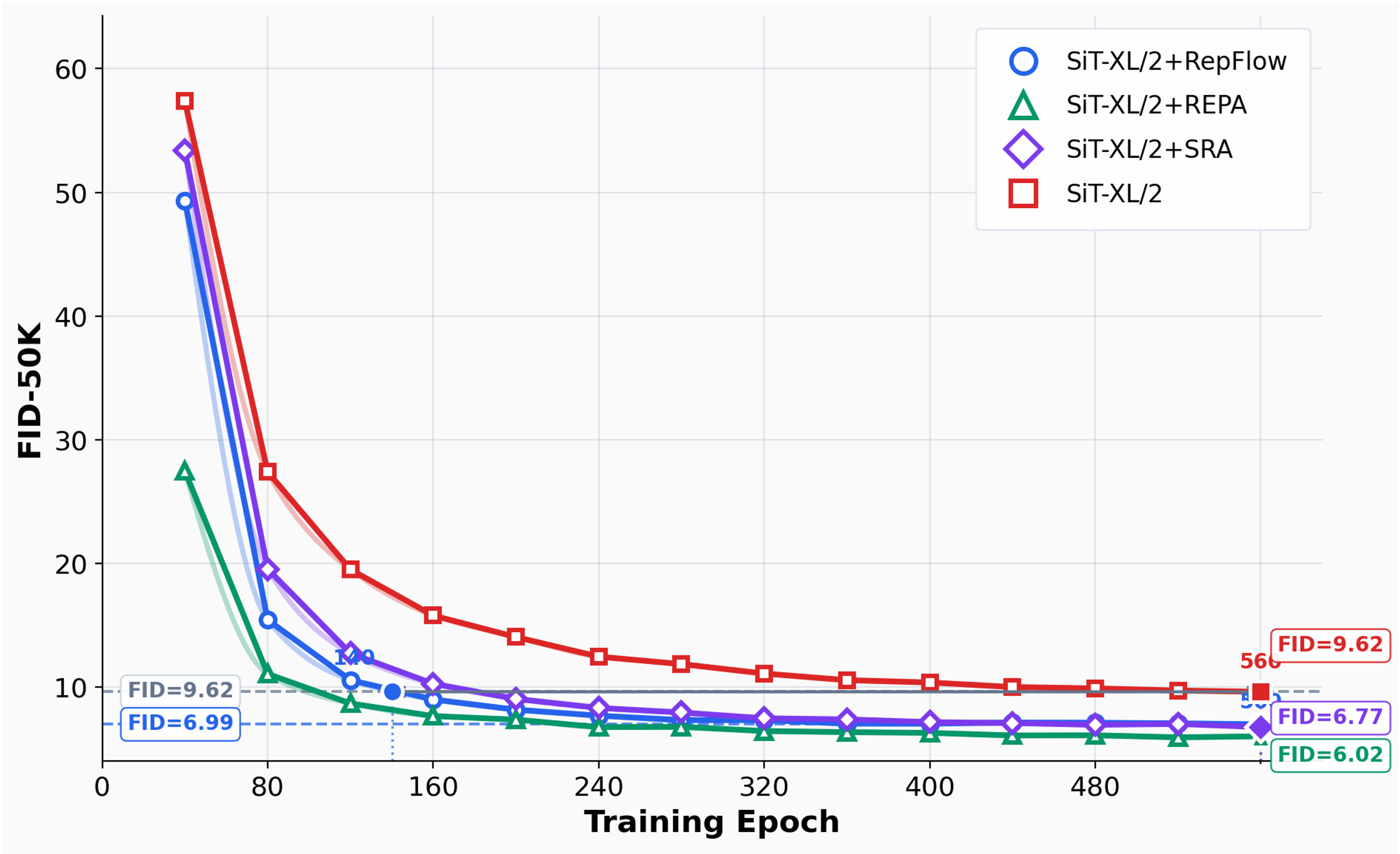}
    \caption{SiT}
    \label{fig:sit_fid}
\end{subfigure}
\hfill
\begin{subfigure}{0.48\linewidth}
    \centering
    \includegraphics[width=\linewidth]{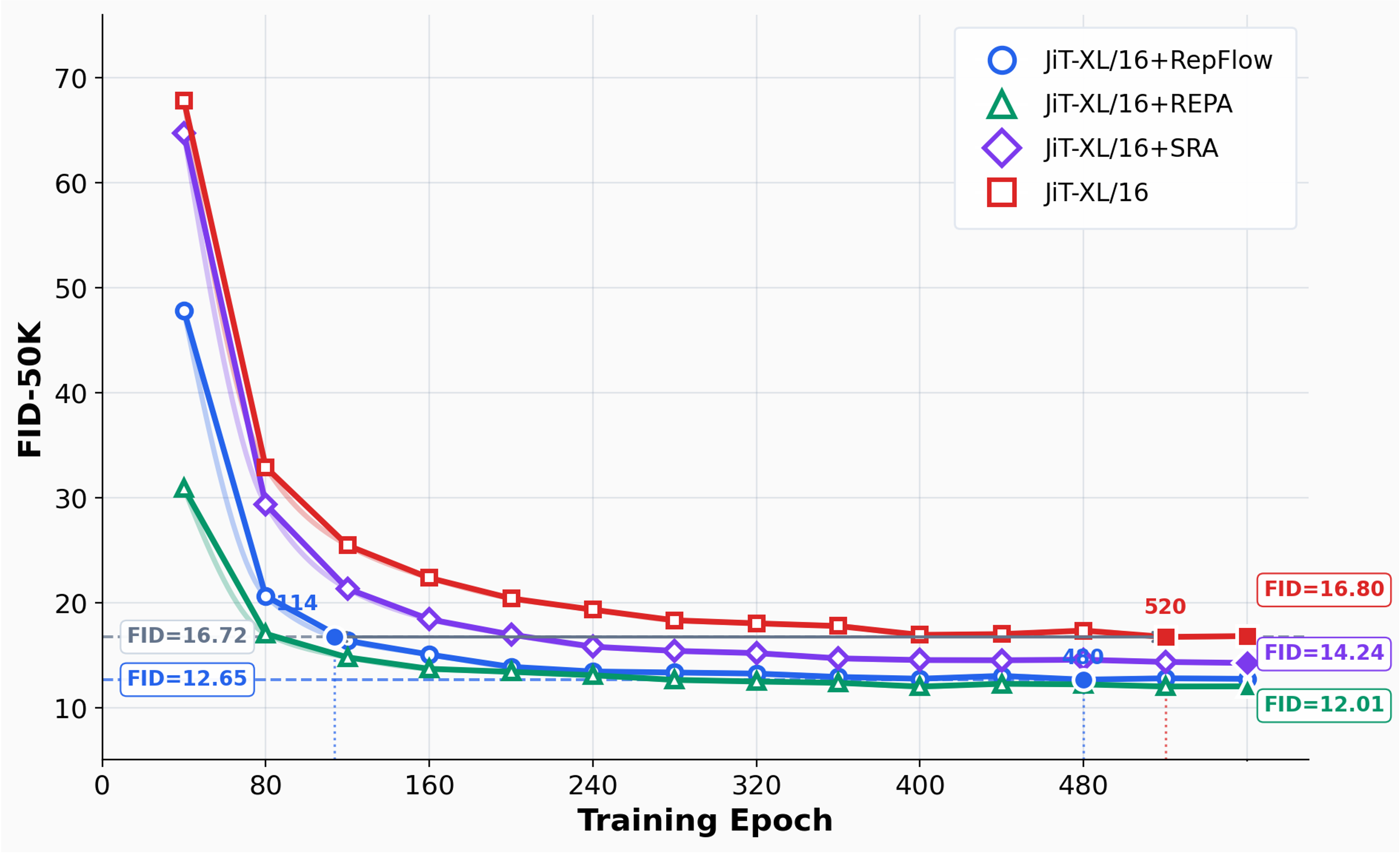}
    \caption{JiT}
    \label{fig:jit_fid}
\end{subfigure}
\caption{Class-conditional ImageNet FID-50K across training epochs for native training, RepFlow, SRA, and REPA. RepFlow learns its representation target concurrently with the generator.}
\label{fig:fid_comparison}
\vspace{-2mm}
\end{figure}

\subsection{Representation-Guided One-Step Post-Training}

We next test whether a representation learned through instance-level reciprocal supervision can also guide distribution matching after joint training. Starting from the same 400-epoch RepFlow-JiT checkpoint, we freeze either the RepFlow ViT-L/16 student or an ImageNet-pretrained MAE ViT-L/16 encoder, matching the student architecture, and update only the generator for 30 epochs using FD-loss~\citep{yang2026representation}, as described in Section~\ref{sec:method}. The data, sampling protocol, optimizer, schedule, and $\mathrm{CFG}=1$ are shared; only the feature space changes.

\begin{table}[!ht]
\centering
\caption{Class-conditional one-step ImageNet generation. The final two rows share the generator checkpoint and post-training protocol. Ext. Enc. lists externally pretrained representation encoders. $\mathrm{FDr}^{6}$~\citep{yang2026representation} averages normalized Fr\'echet distances in Inception~\citep{inception}, ConvNeXt~\citep{convnet}, DINOv2~\citep{dinov2}, MAE~\citep{mae}, SigLIP~\citep{siglip}, and CLIP~\citep{clip} spaces. $\dagger$ denotes FD-SIM training in SigLIP, Inception, and MAE spaces. RepFlow-student FD uses its own learned encoder.}
\label{tab:onestep_system}
\resizebox{\textwidth}{!}{
\begin{tabular}{lccccccccc}
\toprule
Method & Space & Epoch & NFE & Ext. Enc. & FID $\downarrow$ & $\mathrm{FDr}^{6}$ $\downarrow$ & IS $\uparrow$ & Prec. $\uparrow$ & Rec. $\uparrow$ \\
\midrule
\multicolumn{10}{l}{\emph{One-step generative models}} \\
iCT-XL/2~\citep{icm} & latent & -- & 1 & No & 34.24 & -- & -- & -- & -- \\
Shortcut-XL/2~\citep{frans2025one} & latent & 250 & 1 & No & 10.60 & -- & -- & -- & -- \\
MeanFlow~\citep{meanflow} & latent & 240 & 1 & No & 3.43 & -- & -- & -- & -- \\
iMF-XL~\citep{geng2026improved} & latent & 800 & 1 & No & 1.82 & 8.39 & 278.9 & 0.78 & 0.63 \\
Drift-L (latent)~\citep{deng2026generative} & latent & 1280 & 1 & latent-MAE & 1.53 & 10.92 & 257.2 & 0.79 & 0.63 \\
Drift-L (pixel)~\citep{deng2026generative} & pixel & 640 & 1 & pixel-MAE & 1.43 & 10.51 & 305.8 & 0.81 & 0.60 \\
pMF-L~\citep{lu2026one} & pixel & 320 & 1 & ConvNeXt-v2+VGG & 2.72 & 9.09 & 261.7 & 0.81 & 0.56 \\
pMF-H~\citep{lu2026one} & pixel & 360 & 1 & ConvNeXt-v2+VGG & 2.29 & 6.87 & 267.2 & 0.80 & 0.59 \\
\midrule
\multicolumn{10}{l}{\emph{Distilled or post-trained one-step models}} \\
JiT-L + FD-loss$^\dagger$~\citep{yang2026representation} & pixel & 600+100 & 1 & (SigLIP+Incep.+MAE) & 0.77 & 3.24 & 317.3 & 0.77 & 0.66 \\
iMF-XL + FD-loss$^\dagger$~\citep{yang2026representation} & latent & 800+100 & 1 & (SigLIP+Incep.+MAE) & 0.76 & 2.45 & 301.3 & 0.77 & 0.67 \\
pMF-L + FD-loss$^\dagger$~\citep{yang2026representation} & pixel & 320+100 & 1 & (SigLIP+Incep.+MAE) & 0.78 & 2.09 & 309.2 & 0.76 & 0.67 \\
\midrule
\multicolumn{10}{l}{\emph{RepFlow-JiT}} \\
RepFlow-JiT-XL, naive one-step & pixel & 400 & 1 & No & 268.44 & 217.75 & 2.42 & -- & -- \\
RepFlow-JiT-XL + MAE FD & pixel & 400+30 & 1 & MAE & 6.34 & 9.33 & 286.1 & 0.77 & 0.56 \\
\textbf{RepFlow-JiT-XL + RepFlow-student FD} & pixel & 400+30 & 1 & No & 3.14 & 8.06 & 312.8 & 0.82 & 0.56 \\
\bottomrule
\end{tabular}}
\vspace{-2mm}
\end{table}

Direct one-step sampling from the starting checkpoint gives FID 268.44 (Table~\ref{tab:onestep_system}). After post-training, replacing MAE with the RepFlow student lowers FID from 6.34 to 3.14, a 50.5\% reduction, and $\mathrm{FDr}^{6}$ from 9.33 to 8.06. IS and precision also increase, while recall remains 0.56. The student thus retains supervisory value after it is frozen and its role changes from instance-level feedback to a distribution-level objective.

The remaining systems in Table~\ref{tab:onestep_system} provide context across different architectures, training budgets, and objectives. The daggered FD-loss systems train in feature spaces that also appear in the reported FID/$\mathrm{FDr}^{6}$ evaluations. RepFlow-student FD uses a separate learned feature space and none of the six evaluation encoders in its objective.
Figure~\ref{fig:selected_onestep_samples} shows selected samples from the one-step RepFlow-JiT-XL model post-trained with RepFlow-student FD.

\begin{figure}[!ht]
\centering
\includegraphics[width=0.9\linewidth]{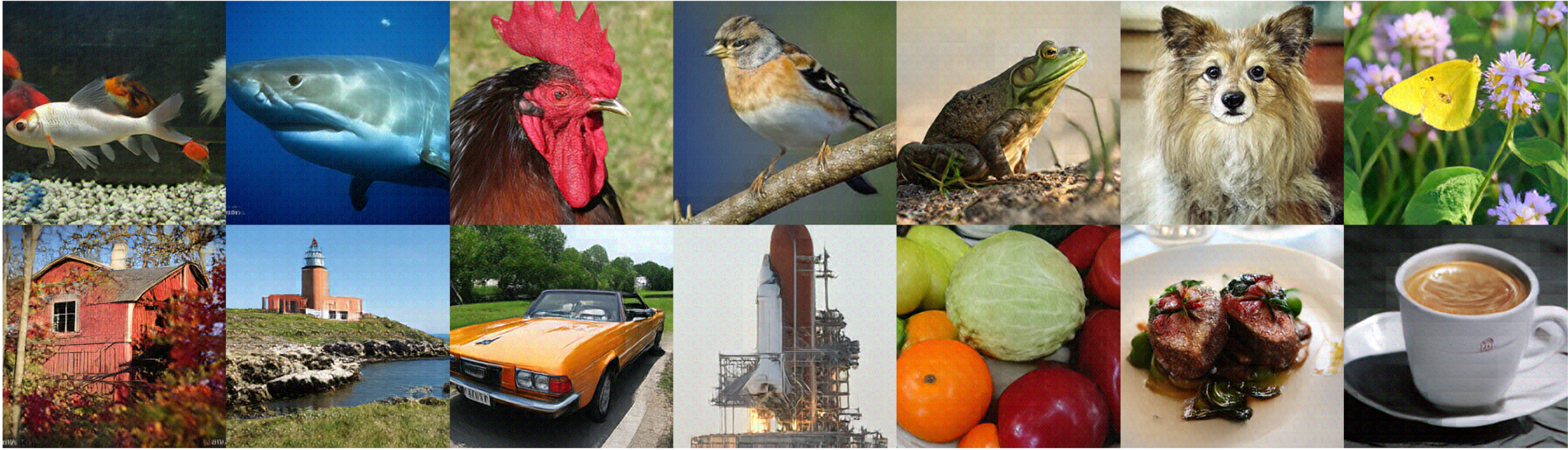}
\caption{Selected one-step samples from RepFlow-JiT-XL with RepFlow-student FD on ImageNet at $256\times256$ resolution.}
\label{fig:selected_onestep_samples}
\vspace{-5mm}
\end{figure}

\section{Conclusion}

RepFlow learns a generator's representation guidance from its own evolving computation. Predicting states across depth and noise level from masked clean images trains a separate, timestep-free encoder; feedback from this encoder then changes the states that supervise subsequent representation learning. The supervisory signal is therefore an outcome of the same training process it helps guide.

On ImageNet, this process improves multi-step generation and frozen linear probing with both latent-space SiT and pixel-space JiT, without an externally pretrained representation teacher. Ablations show that feedback benefits both models and favor generator-state prediction across multiple depths over the tested pixel-reconstruction and single-depth alternatives. Once frozen, the learned student also supports one-step JiT distribution matching, extending the value of the representation beyond the joint training objective that produced it.

\section{AI Use Statement.}
Generative AI tools were used during manuscript preparation to draft and edit
text for readability, refine the presentation of existing material, and assist
with formatting tables, citations, and document layout. They were not used for generating or cleaning
data, developing conceptual or theoretical models, translating research content, analyzing data, or interpreting results. The
authors reviewed and verified all AI-assisted work and take full responsibility
for the final text, claims, results, and artifacts in this paper.
\bibliography{iclr2027_conference}
\bibliographystyle{iclr2027_conference}
\appendix
\section{Related Work}

\subsection{Predictive Representation Learning and Generative Features}

Self-supervised representation learning includes contrastive objectives such as SimCLR and MoCo~\citep{simclr,mocov3}, non-contrastive methods such as BYOL, SimSiam, and DINO~\citep{byol,simsiam,dino}, and masked prediction~\citep{mae}. I-JEPA~\citep{ijepa}, for example, predicts latent targets rather than pixels. RepFlow adopts this predictive perspective but takes its targets from the generator's internal computation rather than from another clean-view encoder.

Pretrained diffusion and flow models also provide useful features~\citep{zhang2022unsupervised,chen2025deconstructing,zeng2025flow} for correspondence, segmentation, and classification~\citep{baranchuk2021label,tang2023emergent,mukhopadhyay2024text}, although feature quality depends strongly on layer and timestep. DIFT selects denoising states for a downstream task~\citep{tang2023emergent}, whereas CleanDIFT distills noisy generator features into a clean-image encoder~\citep{stracke2025cleandift}. RepFlow differs by training its student jointly from multiple generator depths and returning the resulting feature to the generator.

\subsection{Representation-guided Generative Modeling}

Representation-level objectives can improve generation. REPA aligns generator states with a frozen visual encoder~\citep{repa,irepa}, while Representation Autoencoders replace conventional VAE encoders with pretrained representation encoders~\citep{vavae,rae}. RepFlow considers a different source of supervision: the guiding encoder is learned concurrently from the generator's own states. This removes external representation pretraining from the method, but not the computational cost of training the additional student.

Several methods obtain representation guidance from the generator itself. SRA aligns an early, noisy state with a later EMA state at a cleaner timestep~\citep{sra}; Dispersive Loss regularizes internal feature geometry~\citep{disperse}; Self-Flow creates token-wise noise asymmetry within a flow model~\citep{selfflow}; and D-JEPA adds a predictive objective to generative training~\citep{djepa}. These methods avoid a frozen external encoder, but keep representation learning inside the generative backbone. Their behavior can also depend on the generative parameterization: alignment recipes developed for latent diffusion do not transfer unchanged to pixel-space JiT~\citep{pixelrepa}.

Architecturally, RepFlow keeps the generator and representation encoder separate. G2R trains the student from multi-depth generator states, and R2G returns the student feature to the generator. By contrast, D-JEPA and Self-Flow place predictive or asymmetric-noise objectives inside a single generative backbone, while CleanDIFT adapts a pretrained generator for feature extraction. We study the same explicit student--generator interface in both latent-space SiT and pixel-space JiT.

\subsection{Flow Models and One-Step Generation}

One-step generation replaces numerical integration with a single noise-to-data evaluation. Existing approaches use progressive or consistency distillation, distribution matching, and adversarial or score-based objectives~\citep{salimans2022progressive,song2023consistency,icm,yin2024one,yin2024improved,yang2026representation,sauer2024adversarial,sauer2024fast,ren2024hyper,xu2024ufogen}. Others learn finite-time maps directly: Shortcut Models condition on step size, flow-map methods learn finite transitions, and MeanFlow predicts average velocity~\citep{frans2025one,boffi2024flow,meanflow}. Improved MeanFlow, Pixel MeanFlow, and Drifting Models scale these ideas to larger latent- and pixel-space models~\citep{geng2026improved,lu2026one,deng2026generative}. RepFlow leaves the joint-training objective multi-step and uses the learned student only as the metric for a short JiT post-training stage.

\section{Objective and Optimization Analysis}
\label{sec:theory}

We analyze two aspects of RepFlow: the information rewarded by G2R and the
way stop-gradient couples G2R and R2G across optimization steps. The results
characterize the training mechanism; they do not imply semantic utility or
improved generation, which remain empirical questions.

Let $X$ be a clean image, $\widetilde X=\mathcal{M}(X;\zeta)$ its masked view,
and $T$ the generative timestep. The multi-level G2R target is
\begin{equation}
Y_T=\operatorname{Concat}\left[Y_T^{(1)},\ldots,Y_T^{(K)}\right],
\qquad
Y_T^{(k)}=h_{l_k}^G(X_T,T),
\end{equation}
where the randomness in $X_T$ includes the sampled reference noise. The
student code is $U=U_\phi(\widetilde X)$ and the predictor has the form
$f(U,T)$. All expectations below include data, mask, timestep, and reference
noise randomness.

\subsection{G2R as Explained Teacher Variation}

For a fixed representation $U$, define the optimal squared prediction risk
\begin{equation}
\mathcal{R}_2(U)
:=
\inf_f\;\mathbb{E}\left[\|Y_T-f(U,T)\|_2^2\right].
\end{equation}
The following decomposition identifies exactly what reducing this risk
requires from the representation.

\begin{proposition}[Conditional-risk decomposition]
\label{prop:g2r_risk}
For every square-integrable predictor $f(U,T)$,
\begin{align}
\mathbb{E}\|Y_T-f(U,T)\|_2^2
&=
\mathbb{E}\,\operatorname{Tr}\!\left(\operatorname{Var}[Y_T\mid U,T]\right)
\nonumber\\
&\quad+
\mathbb{E}\left\|f(U,T)-\mathbb{E}[Y_T\mid U,T]\right\|_2^2.
\end{align}
Consequently,
\begin{equation}
f_U^\star(U,T)=\mathbb{E}[Y_T\mid U,T],
\qquad
\mathcal{R}_2(U)
=
\mathbb{E}\,\operatorname{Tr}\!\left(\operatorname{Var}[Y_T\mid U,T]\right).
\end{equation}
If $\mathcal{R}_2(T)$ denotes the risk of a predictor that observes the
timestep but not the image representation, then
\begin{equation}
\mathcal{R}_2(T)-\mathcal{R}_2(U)
=
\mathbb{E}\left\|
\mathbb{E}[Y_T\mid U,T]-\mathbb{E}[Y_T\mid T]
\right\|_2^2
\ge 0.
\label{eq:explained-teacher-variation}
\end{equation}
\end{proposition}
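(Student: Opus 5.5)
The plan is the standard bias--variance (Pythagorean) decomposition in $L^2$, with the conditioning $\sigma$-algebra generated by $(U,T)$. Write $m(U,T):=\mathbb{E}[Y_T\mid U,T]$, which exists and is square integrable because $Y_T$ is assumed square integrable. We may take $Y_T$ square integrable, since the generator features are bounded after layer normalization, or simply by assumption. For any square-integrable $f(U,T)$ we split
\[
Y_T-f(U,T)=\bigl(Y_T-m(U,T)\bigr)+\bigl(m(U,T)-f(U,T)\bigr).
\]
Expanding the squared norm and taking expectations leaves a cross term $2\,\mathbb{E}\langle Y_T-m,\,m-f\rangle$. We kill it by the tower property: conditioning on $(U,T)$, the factor $m-f$ is $(U,T)$-measurable and $\mathbb{E}[Y_T-m\mid U,T]=0$. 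Integrability of the product follows from Cauchy--Schwarz.

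For the first term, $\mathbb{E}[\|Y_T-m\|_2^2\mid U,T]=\operatorname{Tr}\operatorname{Var}[Y_T\mid U,T]$ by definition of the conditional covariance. Taking the outer expectation gives the first summand, and the second summand is already in the stated form.

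The consequences follow directly. Since the second term is nonnegative and vanishes exactly at $f=m$ (a.s.), the infimum defining $\mathcal{R}_2(U)$ is attained at $f_U^\star=m$. Its value is $\mathbb{E}\operatorname{Tr}\operatorname{Var}[Y_T\mid U,T]$.

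For the explained-variation identity, we apply the same decomposition with the coarser $\sigma$-algebra $\sigma(T)$. We take the $(U,T)$-measurable predictor to be $f=\mathbb{E}[Y_T\mid T]$, which is $T$-measurable and hence admissible. The decomposition then gives
\[
\mathbb{E}\|Y_T-\mathbb{E}[Y_T\mid T]\|^2=\mathcal{R}_2(U)+\mathbb{E}\|m(U,T)-\mathbb{E}[Y_T\mid T]\|^2.
\]
The left side is $\mathcal{R}_2(T)$, by the first part applied to a predictor seeing only $T$ (the case where $U$ is replaced by a constant). Rearranging gives the claimed identity, and nonnegativity is immediate.

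The only step requiring care is the measure-theoretic bookkeeping rather than any real obstacle. First, $\mathcal{R}_2(T)$ must be interpreted as the infimum over $T$-measurable predictors, so that the first part applies to it. Second, the nesting $\sigma(T)\subseteq\sigma(U,T)$ is what makes $\mathbb{E}[Y_T\mid T]$ an admissible $f(U,T)$. We will state both explicitly. Note also that the randomness in $X_T$ (the reference noise) is simply part of the joint law, so $Y_T$ need not be a function of $(U,T)$. This is precisely why the conditional variance term is generally nonzero.
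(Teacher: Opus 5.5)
Your proposal is correct and matches the paper's proof: the same orthogonal split $Y_T-f=(Y_T-m)+(m-f)$ with the cross term removed by conditioning on $(U,T)$, and the explained-variation identity obtained by plugging $f=\mathbb{E}[Y_T\mid T]$ into that decomposition, which is exactly the paper's ``same orthogonal decomposition applied to $m(U,T)-\mathbb{E}[Y_T\mid T]$.'' Your extra bookkeeping makes explicit what the paper's two-line proof leaves implicit: square integrability of $Y_T$, admissibility of $\mathbb{E}[Y_T\mid T]$ via $\sigma(T)\subseteq\sigma(U,T)$, and reading $\mathcal{R}_2(T)$ as an infimum over $T$-measurable predictors.
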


\begin{proof}
Set $m(U,T)=\mathbb{E}[Y_T\mid U,T]$ and expand
$Y_T-f=(Y_T-m)+(m-f)$. The cross term has zero expectation after conditioning
on $(U,T)$, which gives the first identity and its minimizer. Applying the same
orthogonal decomposition to $m(U,T)-\mathbb{E}[Y_T\mid T]$ yields
Equation~\ref{eq:explained-teacher-variation}.
\end{proof}

Equation~\ref{eq:explained-teacher-variation} has a direct interpretation:
G2R rewards the student for explaining teacher variation beyond timestep alone. Because $U$ is a function of $\widetilde X$,
the same orthogonal decomposition yields
\begin{equation}
\mathcal{R}_2(U)
\!=\!
\mathcal{R}_2(\widetilde X)
+
\mathbb{E}\left\|
\mathbb{E}[Y_T\mid \widetilde X,T]
-
\mathbb{E}[Y_T\mid U,T]
\right\|_2^2
\!\ge\!
\mathcal{R}_2(\widetilde X),
\label{eq:irreducible-g2r}
\end{equation}
where $\mathcal{R}_2(\widetilde X)$ is the Bayes risk when the predictor sees
the entire masked view. Equality holds if and only if
\begin{equation}
\mathbb{E}[Y_T\mid U,T]
=
\mathbb{E}[Y_T\mid \widetilde X,T]
\quad\text{almost surely}.
\label{eq:predictive-sufficiency}
\end{equation}
Hence any student code that attains the masked-view Bayes risk is a
mean-sufficient statistic for the cross-timestep prediction profile
$\{\mathbb{E}[Y_T\mid\widetilde X,T=t]\}_{t}$, it need not retain aspects of
the masked image that never change this profile. Conversely, target components
that depend on reference noise or unobserved image content contribute to the
irreducible term $\mathcal{R}_2(\widetilde X)$ and cannot be recovered by any
student representation of the masked view.

Experiments use Smooth\,L1 rather than squared loss. Its population minimizer
is a conditional Huber location, so the variance identities apply only to the
squared surrogate in Section~\ref{sec:method}. Under either loss, the predictor
acts only on $(U,T)$. The objective alone does not guarantee semantic features,
masking and a shared timestep-free student supply the inductive bias. We use
linear probing to evaluate whether the resulting representation separates
ImageNet classes.

No independence assumption between target layers is needed: both squared and
Smooth\,L1 losses decompose over target coordinates. The predictor may
specialize its outputs across layers and timesteps, but every target is decoded
from the same code $U$. This construction encourages a shared explanatory code
without asserting that every selected layer carries complementary information.
At inference, RepFlow exposes $\bar U_\phi(X)$ without a layer or timestep index,
the removal of stage selection is an architectural property, not by itself a
claim of semantic superiority.

\subsection{Gradient Routing and Reciprocal Dynamics}

Stop-gradient makes the instantaneous parameter routing exact:
\begin{align}
\nabla_\theta\mathcal{L}_{\mathrm{joint}}
&=
\nabla_\theta\mathcal{L}_{\mathrm{gen}}
+\lambda_2\nabla_\theta\mathcal{L}_{\mathrm{R2G}},
&
\nabla_\phi\mathcal{L}_{\mathrm{joint}}
&=\lambda_1\nabla_\phi\mathcal{L}_{\mathrm{G2R}},
\nonumber\\
\nabla_\psi\mathcal{L}_{\mathrm{joint}}
&=\lambda_1\nabla_\psi\mathcal{L}_{\mathrm{G2R}},
&
\nabla_\omega\mathcal{L}_{\mathrm{joint}}
&=\lambda_2\nabla_\omega\mathcal{L}_{\mathrm{R2G}}.
\label{eq:gradient-routing}
\end{align}
G2R has no direct generator gradient, and the EMA full-view branch has no
direct student gradient. Reciprocity arises across optimization steps.
Suppressing optimizer state, the updates have the schematic form
\begin{align}
(\phi_{k+1},\psi_{k+1})
&\leftarrow
(\phi_k,\psi_k)
-\eta_s\nabla_{\phi,\psi}
\mathcal{L}_{\mathrm{G2R}}(\phi_k,\psi_k;\theta_k),
\label{eq:student-update}\\
(\theta_{k+1},\omega_{k+1})
&\leftarrow
(\theta_k,\omega_k)
-\eta_g\nabla_{\theta,\omega}
\left[
\mathcal{L}_{\mathrm{gen}}(\theta_k)
+\lambda_2\mathcal{L}_{\mathrm{R2G}}(
\theta_k,\omega_k;\bar\phi_k)
\right],
\label{eq:generator-update}\\
\bar\phi_{k+1}
&\leftarrow
m\bar\phi_k+(1-m)\phi_{k+1}.
\label{eq:ema-update}
\end{align}
Semicolons mark stopped arguments: $\theta_k$ determines the current G2R
targets, and $\bar\phi_k$ determines the current R2G targets, but neither
receives a gradient through that occurrence. The loop nevertheless closes
across iterations. R2G changes $\theta_{k+1}$ and therefore the generator
states used by later G2R updates; G2R changes $\phi_{k+1}$ and therefore the EMA
target used by later generator updates. R2G can thus improve the eventual
student without sending it a direct gradient, consistent with
Table~\ref{tab:reciprocal_ablations}.

Any stationary fixed point of this idealized update, with the EMA tracking the
online student, satisfies
\begin{equation}
\nabla_{\phi,\psi}\mathcal{L}_{\mathrm{G2R}}=0,
\qquad
\nabla_{\theta,\omega}
\left(\mathcal{L}_{\mathrm{gen}}
+\lambda_2\mathcal{L}_{\mathrm{R2G}}\right)=0,
\qquad
\bar\phi=\phi.
\end{equation}
At such a fixed point, the student summarizes predictable variation in the
current generator, while the generator is stationary under both its native
objective and alignment to the current student.

The risk decomposition identifies the prediction profile preserved by a
Bayes-optimal G2R code, and the gradient equations show how two stopped paths
still create feedback over time.

% -----------------------------------------------------------------------------
% Inlined implementation and evaluation appendix
% (formerly appendix_experimental_details.tex)
% -----------------------------------------------------------------------------
\section{Implementation and Evaluation Details}
\label{app:exp}

Unless noted otherwise, all experiments use ImageNet-1K at
$256{\times}256$ resolution, a global batch size of $1024$, bfloat16 mixed
precision, and eight NVIDIA H200 GPUs. Transformer layers are indexed from
zero, with an index referring to the output of that block. The main text
parameterizes the path from data ($t{=}0$) to noise ($t{=}1$); the implementation
uses $\tau=1-t$, and all timesteps below are reported in $\tau$.
Following \citet{sun2025noiseconditioningnecessarydenoising}, we omit explicit
noise-level (timestep) conditioning from the denoising network in all
flow-matching experiments.

\subsection{Data and Generative Backbones}
\label{app:backbones}

Training uses the ImageNet-1K split ($1{,}281{,}167$ images, $1{,}000$
classes)~\citep{imagenet}. We apply the ADM center crop at
$256{\times}256$, random horizontal flipping, and rescaling to $[-1,1]$.

\paragraph{SiT.}
SiT-XL/2~\citep{sit} operates on $32{\times}32{\times}4$ latents from the
frozen \texttt{sd-vae-ft-ema} VAE with scale $0.18215$. Its 28 blocks use
width $1152$, 16 heads, MLP ratio 4, and patch size 2. We use the official
linear interpolant and velocity objective with $\tau\sim\mathrm{Unif}[0,1]$.
The student receives the corresponding RGB image; both branches contain
$16{\times}16=256$ spatial tokens.

\paragraph{JiT.}
JiT-XL/16~\citep{jit} acts directly on RGB patches. Its 28 blocks use width
$1152$, 16 heads, a 128-dimensional bottleneck patch embedding, 2D RoPE, QK
normalization, and SwiGLU MLPs. In implementation coordinates,
\begin{equation}
z_\tau=\tau x_0+(1-\tau)x_1,
\qquad
\tau=\sigma(\varepsilon),\quad
\varepsilon\sim\mathcal{N}(-0.8,0.8^2).
\end{equation}
The network predicts $x_0$ and minimizes unweighted velocity MSE with
$v=(x_0-z_\tau)/(1-\tau)$, clamping the denominator at $0.05$. Equivalently,
the $x_0$ regression loss is weighted by $(1-\tau)^{-2}$. The attached RepFlow
modules use the same optimizer schedule summarized in Table~\ref{tab:app-optim}.
Conditional training uses label dropout $0.1$; unconditional training uses the
null label. All reported samples are generated without classifier-free guidance.

\begin{table}[!ht]
\centering
\small
\caption{Optimization and sampling settings. The EMA decay applies to the
generator and, in RepFlow, to the full-view student target.}
\label{tab:app-optim}
\setlength{\tabcolsep}{4pt}
\begin{tabular}{lcc}
\toprule
 & SiT-XL/2 & JiT-XL/16 \\
\midrule
AdamW $\beta$ & $(0.9,0.999)$ & $(0.9,0.95)$ \\
Learning rate & $1{\times}10^{-4}$ & $2{\times}10^{-4}$ \\
Schedule & constant & cosine, $5$ warmup epochs \\
Minimum learning rate & -- & $1{\times}10^{-6}$ \\
Weight decay & $0$ & $0$ \\
EMA decay & $0.9999$ & $0.9999$ \\
Multi-step sampler & dopri5, $250$ NFE & Heun, $50$ NFE \\
\bottomrule
\end{tabular}
\end{table}

\subsection{RepFlow Instantiation}
\label{app:repflow-impl}

\paragraph{Student views.}
The student ViT receives no timestep, class, or adaLN conditioning. ViT-B/16
has 12 blocks, width 768, and 12 heads (85.6M parameters); ViT-L/16 has 24
blocks, width 1024, and 16 heads (303.0M). SiT students use fixed sine--cosine
positional embeddings, whereas JiT students use 1D RoPE. G2R masks 75\% of the
256 RGB tokens and encodes only the 64 visible tokens, without mask tokens. R2G
and evaluation process the complete image. Across all configurations,
$U_\phi$ is strictly an RGB pixel-space encoder: it never consumes VAE latents
or any other generator-space inputs, including when the generator is SiT.

\paragraph{Generation-to-representation path.}
G2R concatenates generator states from
\begin{equation}
\mathcal{D}=\{5,12,19,26\},
\end{equation}
forming a 4608-dimensional target at each spatial position. The targets are
channel-wise layer-normalized and detached. The predictor restores the
$16{\times}16$ grid, fills masked positions with sinusoidal timestep embeddings,
and applies ten width-384 blocks with 12 heads. Smooth\,L1 loss ($\beta=1$) is
computed only at masked positions.

In every configuration, the predictor reads the final student block and uses
width 384 with 12 heads.

\paragraph{Representation-to-generation path.}
R2G aligns the generator state after block $7$ with the full-view student
representation. A two-layer head projects the generator state to the student
dimension; for JiT, its hidden width is $1024$. Both sides are
$\ell_2$-normalized per token, and the loss is cosine distance. The stopped
full-view target comes from an EMA student with decay $0.9999$.

The implemented joint objective is therefore
\begin{equation}
\mathcal{L}
=
\mathcal{L}_{\mathrm{gen}}
+\lambda_1\mathcal{L}^{\mathrm{SmoothL1}}_{\mathrm{G2R}}
+\lambda_2\mathcal{L}^{\mathrm{cos}}_{\mathrm{R2G}},
\qquad
\lambda_1=1,\quad \lambda_2=0.1.
\end{equation}
Section~\ref{sec:method} uses the same Smooth\,L1 G2R loss and cosine-distance
R2G loss as the implemented objective above. Each joint update evaluates the
generator once, the online student on a
masked view, the predictor once, and the EMA student on the complete image.

\subsection{Baselines and Evaluation Protocols}
\label{app:evaluation}

\paragraph{Generation controls.}
For unconditional experiments, the generator control retains G2R but sets
$\lambda_2=0$. Because the G2R target is detached, the generator update is
identical to native training, while the run still produces a student for
representation evaluation. The class-conditional native baseline omits the
student. In both comparisons, R2G is the only additional generator gradient.

REPA~\citep{repa} and SRA~\citep{sra} match the native backbone, data,
batch size, EMA, and sampler. REPA projects block 7 through a three-layer MLP
and aligns it with frozen DINOv2-ViT-L/14 patch tokens~\citep{dinov2} using a
cosine-loss coefficient of 0.5. SRA aligns block 7 with an EMA block-19 state at
$\tau'=(\tau+\delta)\wedge1$, $\delta\sim\mathrm{Unif}[0,0.2]$, using the same coefficient.

\paragraph{Generation metrics.}
FID-50K, spatial FID, Inception Score, precision, and recall are computed from
$50{,}000$ EMA samples with the ADM evaluator and the fixed ImageNet reference
statistics. SiT uses the adaptive ODE sampler and JiT uses Heun as specified
in Table~\ref{tab:app-optim}. One-step JiT evaluation uses a single
terminal-noise forward pass ($\mathrm{NFE}=1$). All evaluations use
$\mathrm{CFG}=1$.

\paragraph{Frozen linear probing.}
We freeze the EMA student and average its $256$ final-layer tokens. A linear
classifier is then trained for $30$ epochs with Adam, learning rate $10^{-3}$,
cosine decay, batch size $1024$, and cross-entropy. Training uses the same
center crop and horizontal flip as representation learning; validation uses
only the center crop.

The raw generator baseline searches over layer and timestep on the
unconditional G2R-only checkpoint. The selected features are layer $21$ at
$\tau=0.7$ for SiT and layer $21$ at $\tau=0.5$ for JiT. This selection is
used only for the raw-feature baseline; RepFlow always evaluates a single
full-view student feature.

\paragraph{Ablation controls.}
The pixel-target control retains the ViT-L student, $75\%$ masking ratio,
R2G branch, data, and optimization budget, but replaces the detached
multi-level generator target in G2R with masked RGB-patch reconstruction.
The target-depth control retains the full reciprocal objective and predicts
only the strongest evaluated single generator layer: block $19$ for SiT and
block $20$ for JiT. The multi-level variant uses blocks $\{5,12,19,26\}$.

\subsection{One-Step Post-Training}
\label{app:onestep}

The class-conditional one-step experiment begins from the 400-epoch
RepFlow-JiT checkpoint, with the ViT-L student frozen. Real statistics are
computed once from layer-normalized, mean-pooled ImageNet features. Generated
first and second moments are tracked with EMA decay 0.999 over an effective
population of 50,000 samples. Only the generator is updated by the mean and
squared-Bures terms from Section~\ref{sec:method}; evaluation uses
$\mathrm{CFG}=1$.

The matched control keeps the initialization, generated samples, optimizer,
schedule, and duration fixed, replacing only the frozen RepFlow ViT-L/16
feature with an ImageNet-pretrained MAE ViT-L/16 feature. Real statistics use
the same protocol.

Post-training uses AdamW with learning rate $10^{-5}$, cosine decay to zero,
$6{,}250$ warmup steps, and global batch size $1024$. The reported model is
trained for $30$ additional epochs ($37{,}500$ steps). $\mathrm{FDr}^{6}$ averages
normalized Fr\'echet distances measured in Inception, ConvNeXt, DINOv2, MAE,
SigLIP, and CLIP feature spaces. FD-SIM directly optimizes SigLIP, Inception,
and MAE distributions, overlapping FID and $\mathrm{FDr}^{6}$ evaluation. RepFlow-student
FD uses a learned feature space distinct from all six evaluation encoders.

\subsection{Training Cost and Statistical Scope}
\label{app:cost}

RepFlow trades additional training cost for a jointly learned representation.
At 600 epochs, SiT takes about 100, 128, and 141 hours for Native, RepFlow-L,
and conditional RepFlow-L, respectively; JiT takes 66 and 99--105 hours for
Native and RepFlow-L. Throughput excludes sampling, and all runs use seed 0.

\begin{center}
\captionof{table}{Training cost and wall-clock matching at global batch size $1024$ on eight H200 GPUs. \textbf{Top:} measured system cost; SiT memory was not logged. \textbf{Bottom:} the RepFlow checkpoint with the same measured wall-clock as a 600-epoch Native run, computed as $600 \times s_{\mathrm{RF}} / s_{\mathrm{Native}}$.}
\label{tab:app-cost}
\small
\setlength{\tabcolsep}{5.0pt}
\renewcommand{\arraystretch}{0.92}
\begin{tabular}{lccccc}
\toprule
& \multicolumn{2}{c}{SiT} & \multicolumn{3}{c}{JiT} \\
\cmidrule(lr){2-3}\cmidrule(lr){4-6}
Method & Params & Steps/s & Params & Steps/s & Memory \\
\midrule
Native & $673$M & $2.09$ & $675$M & $3.15$ & $61.4$\,GB \\
REPA & $682$M & $1.82$ & ${\sim}680$M & $2.59$ & $64.5$\,GB \\
SRA & $678$M & $1.78$ & ${\sim}680$M & $2.45$ & $62.8$\,GB \\
RepFlow-B & $781$M & $1.82$ & $785$M & $2.70$ & $69.3$\,GB \\
RepFlow-L & $999$M & $1.63$ & $1002$M & $2.12$ & $80.1$\,GB \\
RepFlow-L (cond.) & $999$M & $1.48$ & $1002$M & $1.98$ & $83.9$\,GB \\
\bottomrule
\end{tabular}

\vspace{5pt}
\setlength{\tabcolsep}{5.0pt}
\begin{tabular}{lcccc}
\toprule
Setting & Native rate & RepFlow rate & Native budget & Equal-time RepFlow \\
\midrule
Unconditional SiT & $2.09$ steps/s & $1.63$ steps/s & 600 epochs & 468 epochs \\
Unconditional JiT & $3.15$ steps/s & $2.12$ steps/s & 600 epochs & 404 epochs \\
Class-conditional SiT & $2.09$ steps/s & $1.48$ steps/s & 600 epochs & 425 epochs \\
Class-conditional JiT & $3.15$ steps/s & $1.98$ steps/s & 600 epochs & 377 epochs \\
\bottomrule
\end{tabular}
\end{center}

At all four matched checkpoints, the RepFlow curves in
Figures~\ref{fig:uncond} and~\ref{fig:fid_comparison} remain below the
600-epoch Native endpoint. The generative gain therefore persists when RepFlow
is shortened to match Native wall-clock compute, rather than only when the two
methods receive equal data exposure.

\section{Uncurated One-Step Samples}
\label{app:uncurated-samples}

The class grids below show random one-step ImageNet samples
($\mathrm{NFE}=1$). All generated samples are displayed as produced, without
manual selection or post hoc filtering.

\begin{figure}[p]
\centering
\includegraphics[width=0.48\linewidth]{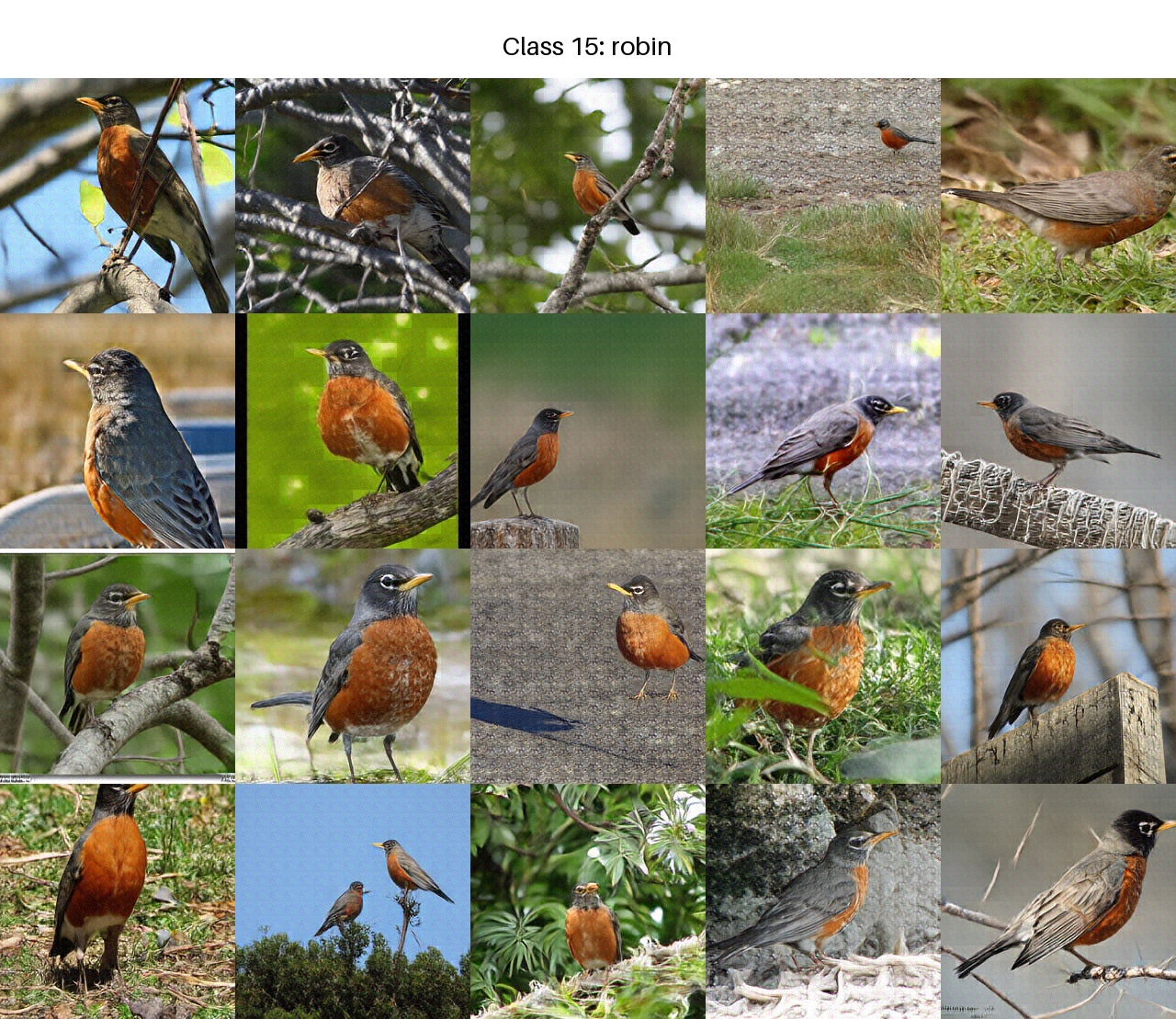}\hfill
\includegraphics[width=0.48\linewidth]{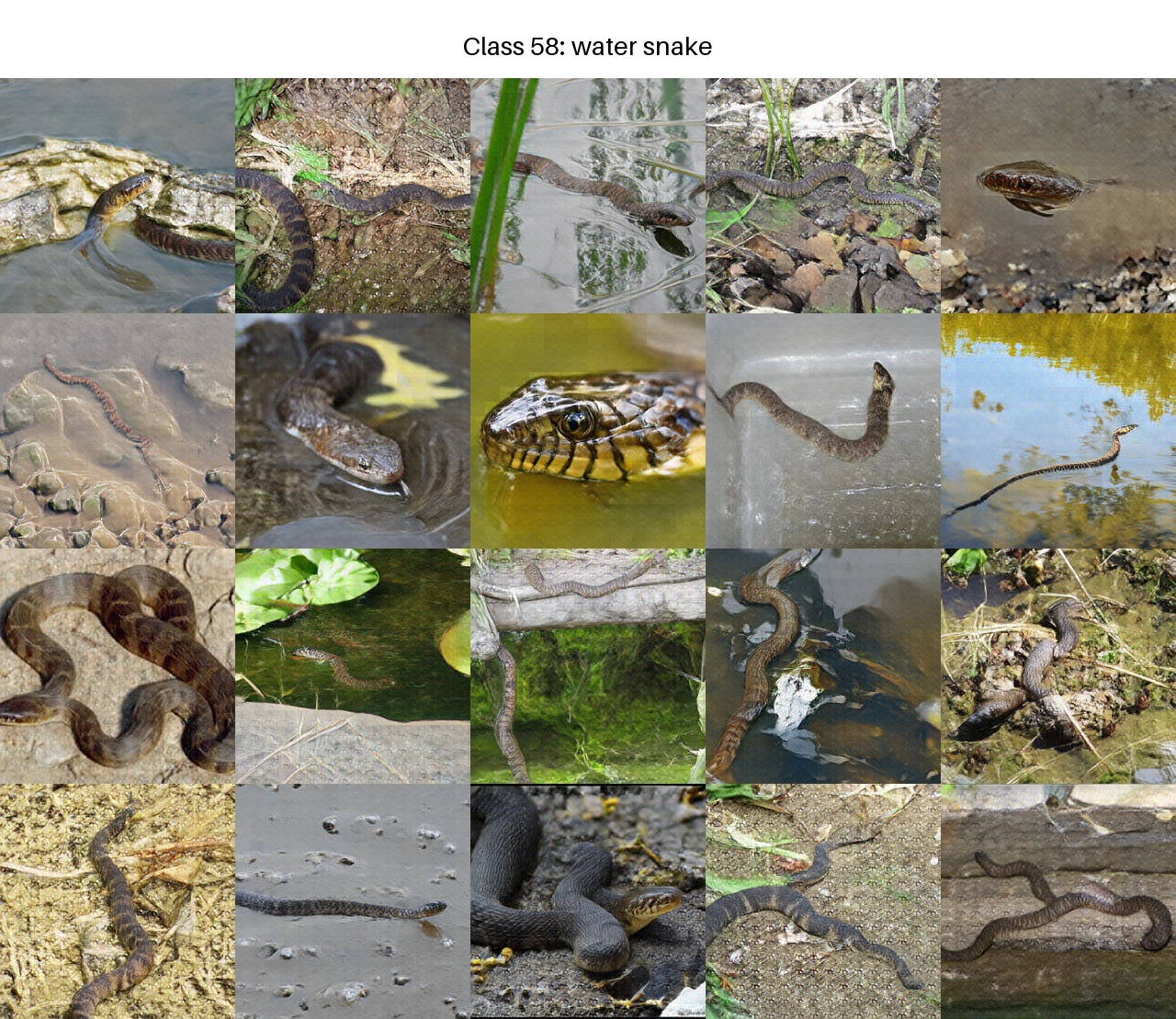}\\
\includegraphics[width=0.48\linewidth]{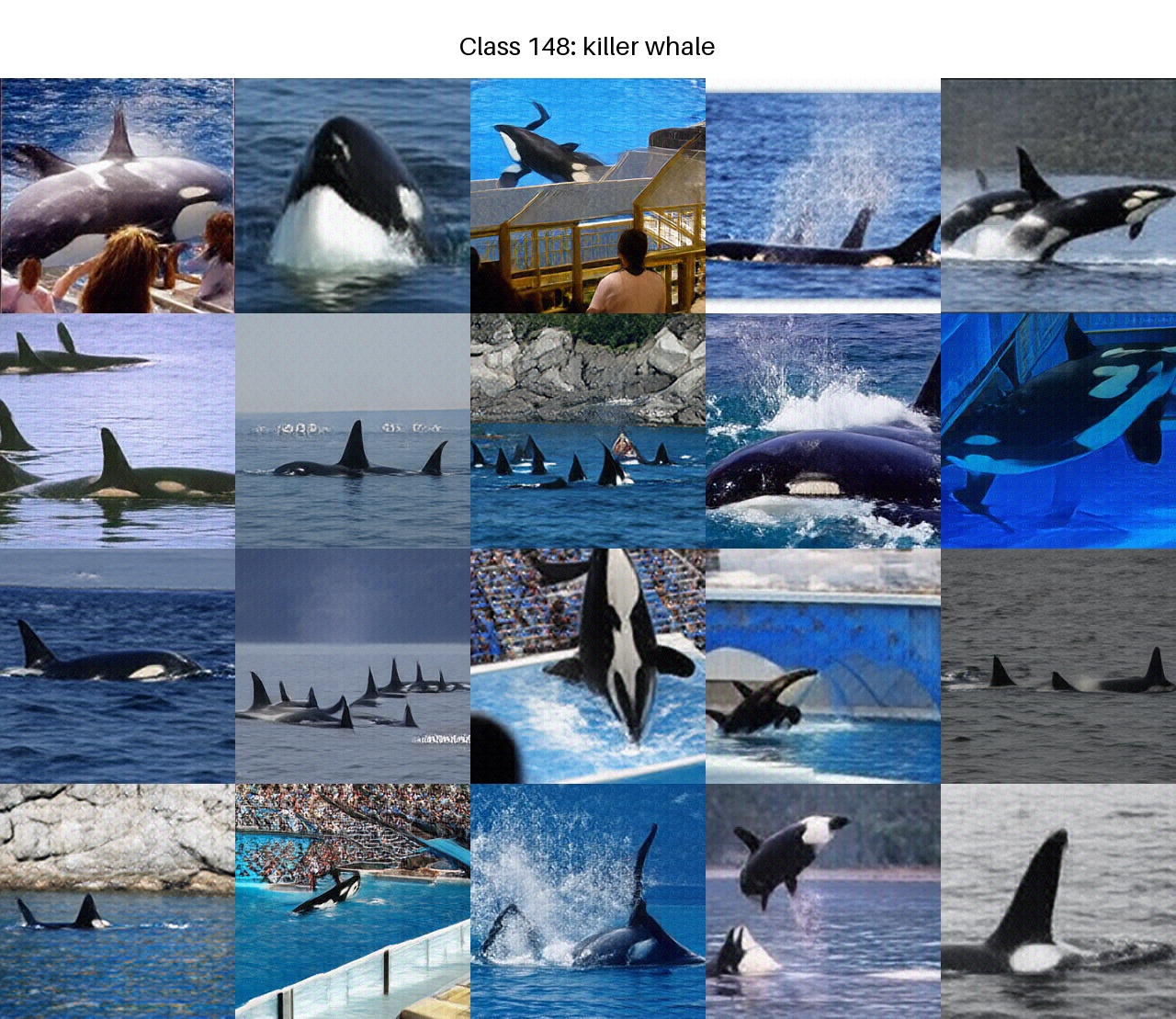}\hfill
\includegraphics[width=0.48\linewidth]{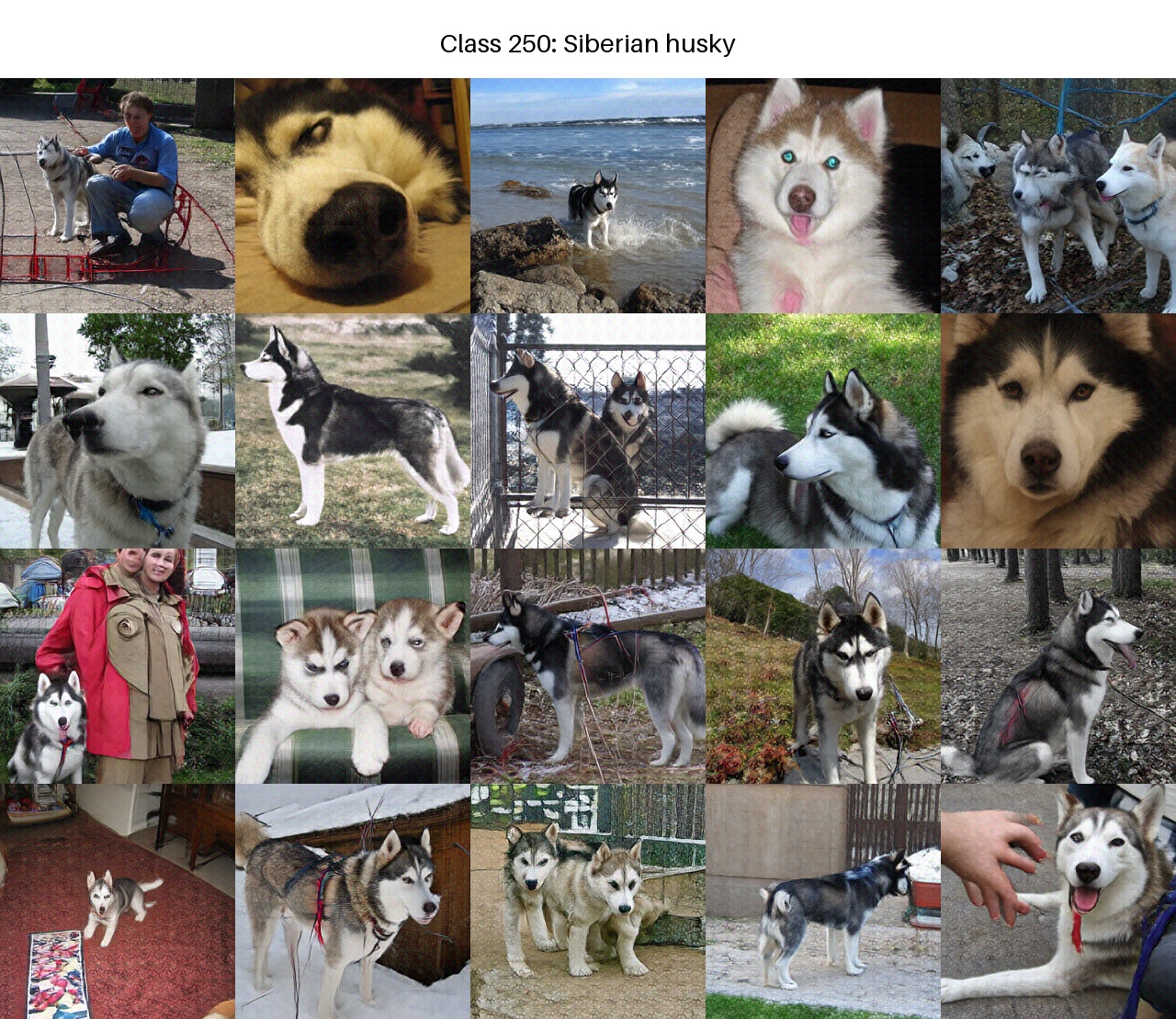}\\
\includegraphics[width=0.48\linewidth]{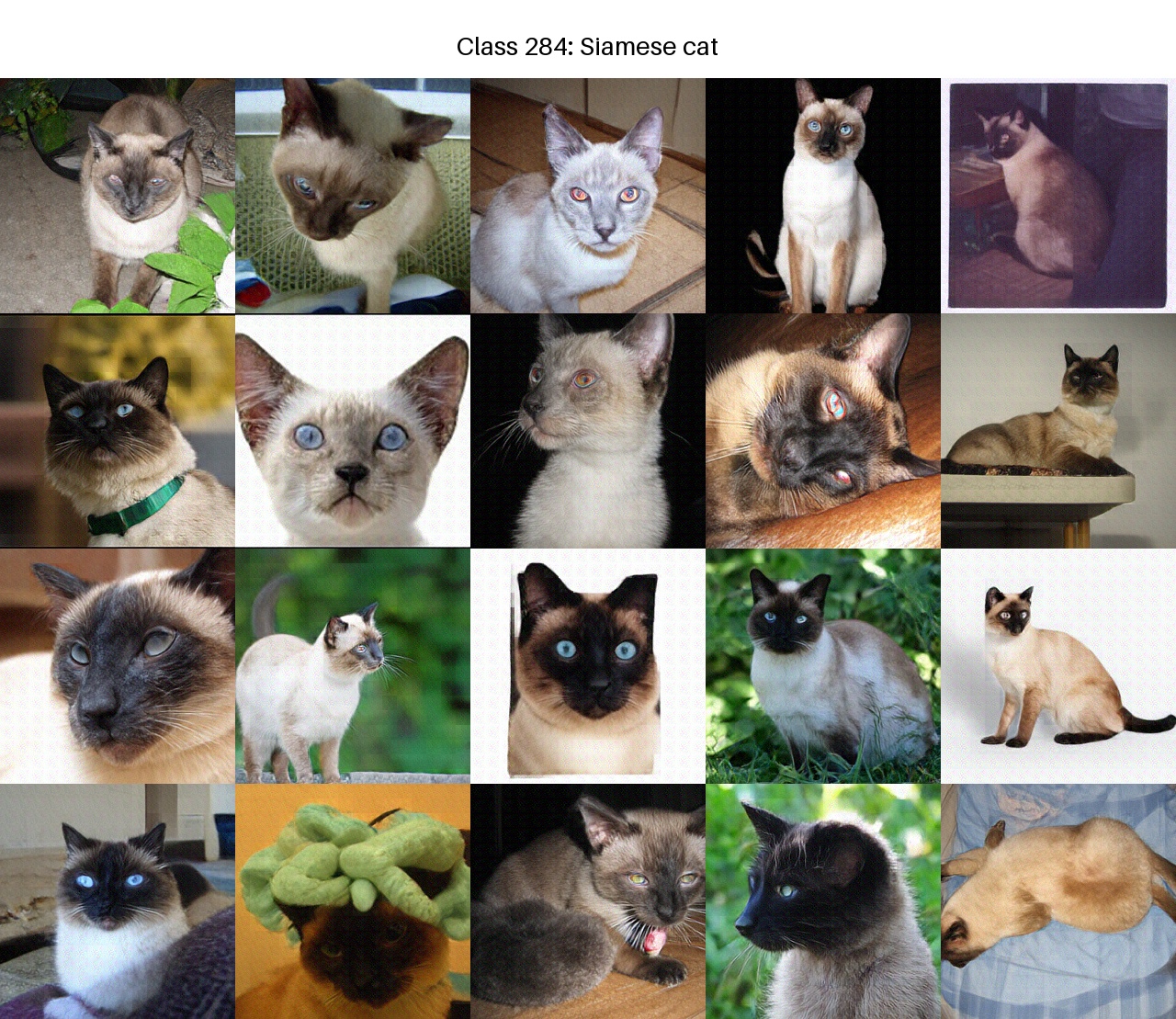}\hfill
\includegraphics[width=0.48\linewidth]{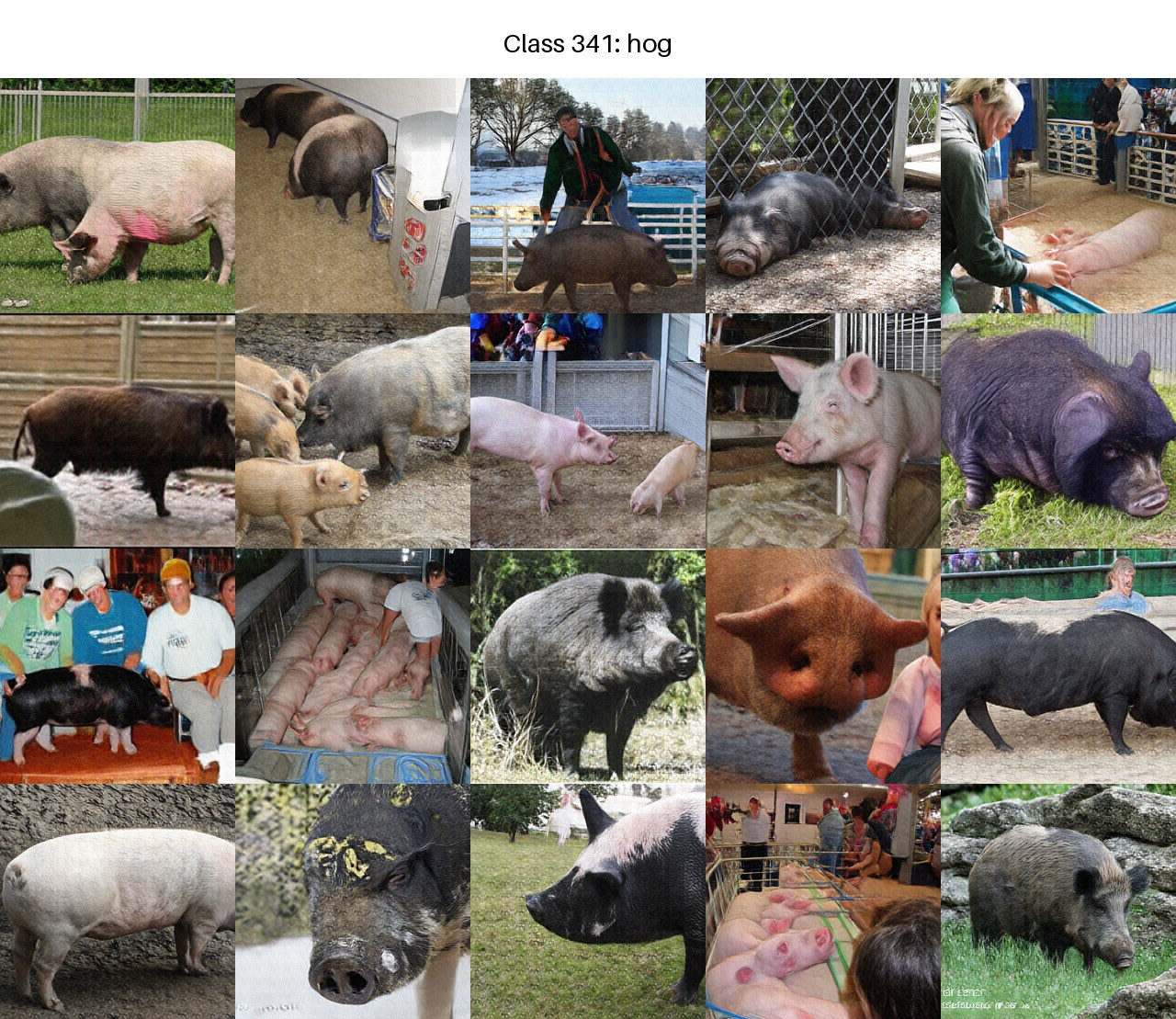}
\caption{Random, uncurated one-step ImageNet samples, classes 15--341.}
\label{fig:uncurated-onestep-1}
\end{figure}

\begin{figure}[p]
\centering
\includegraphics[width=0.48\linewidth]{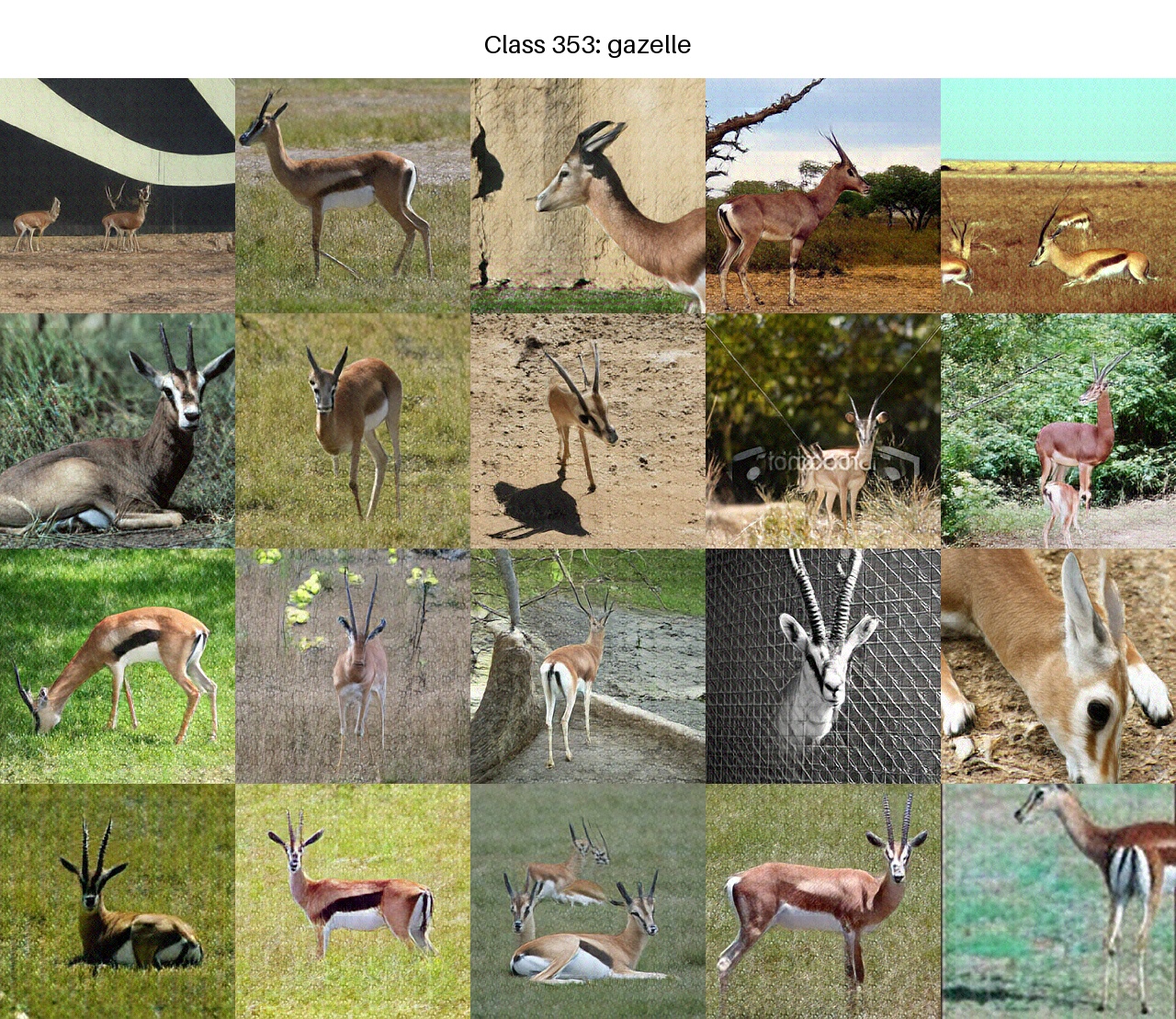}\hfill
\includegraphics[width=0.48\linewidth]{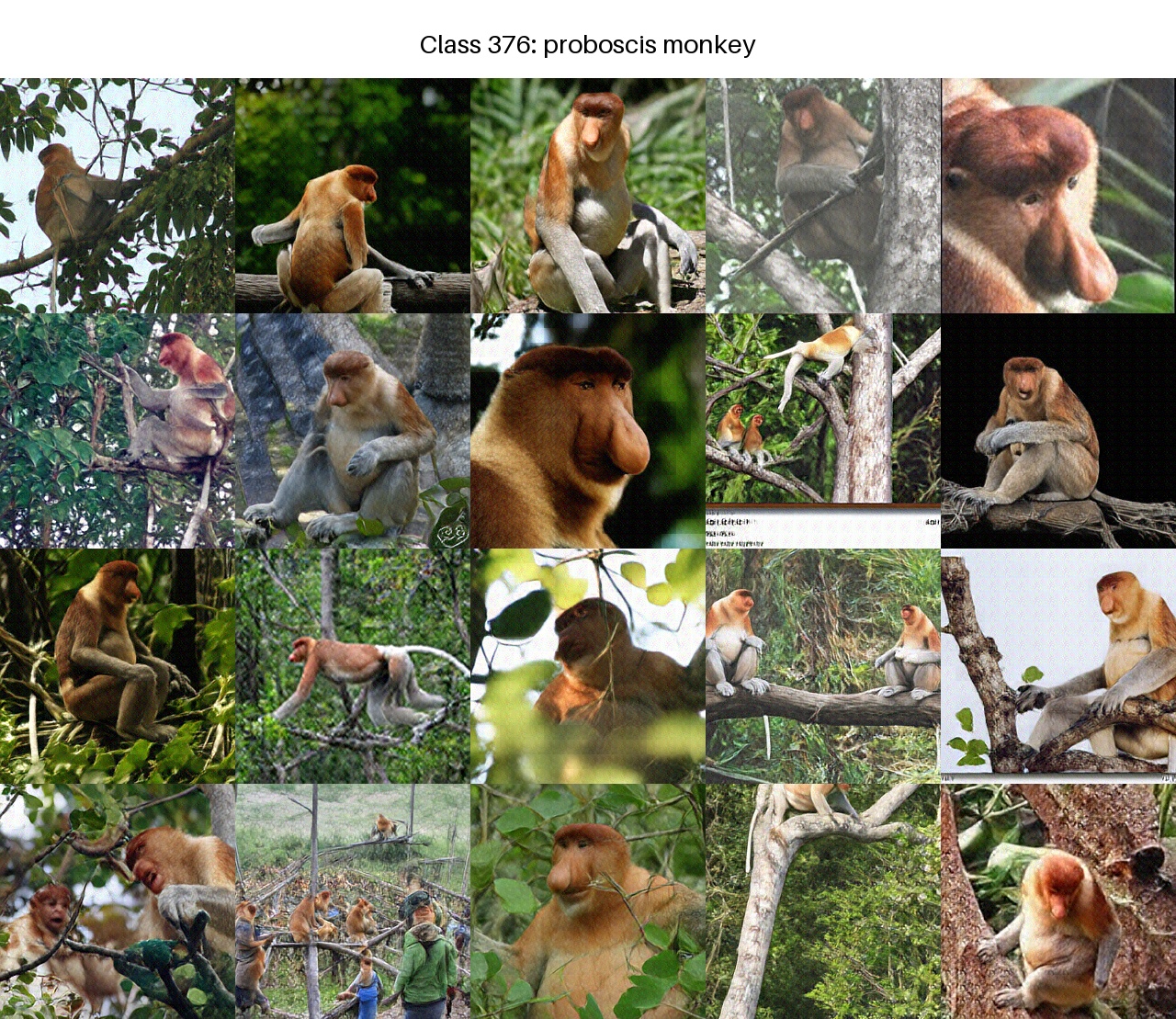}\\
\includegraphics[width=0.48\linewidth]{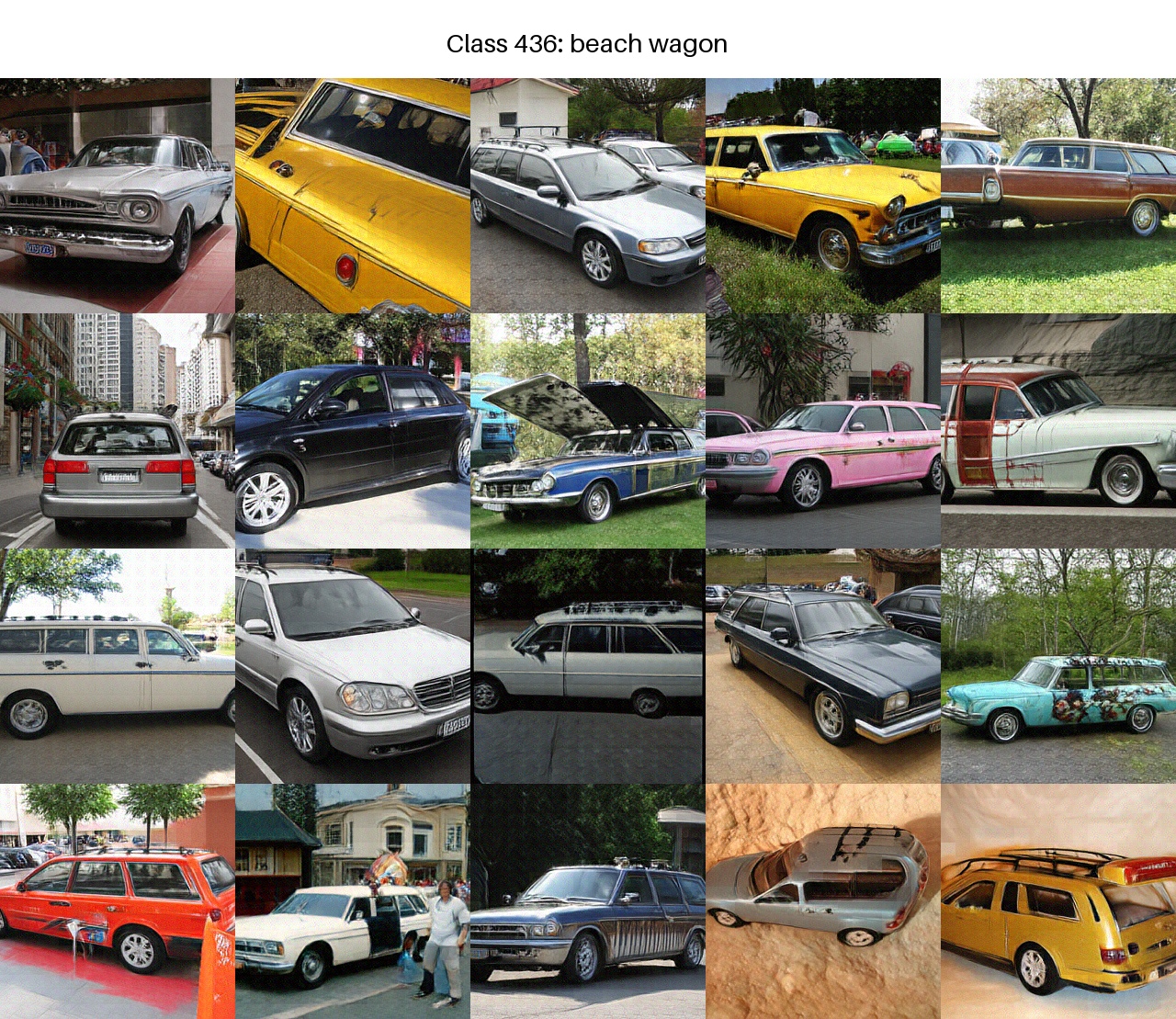}\hfill
\includegraphics[width=0.48\linewidth]{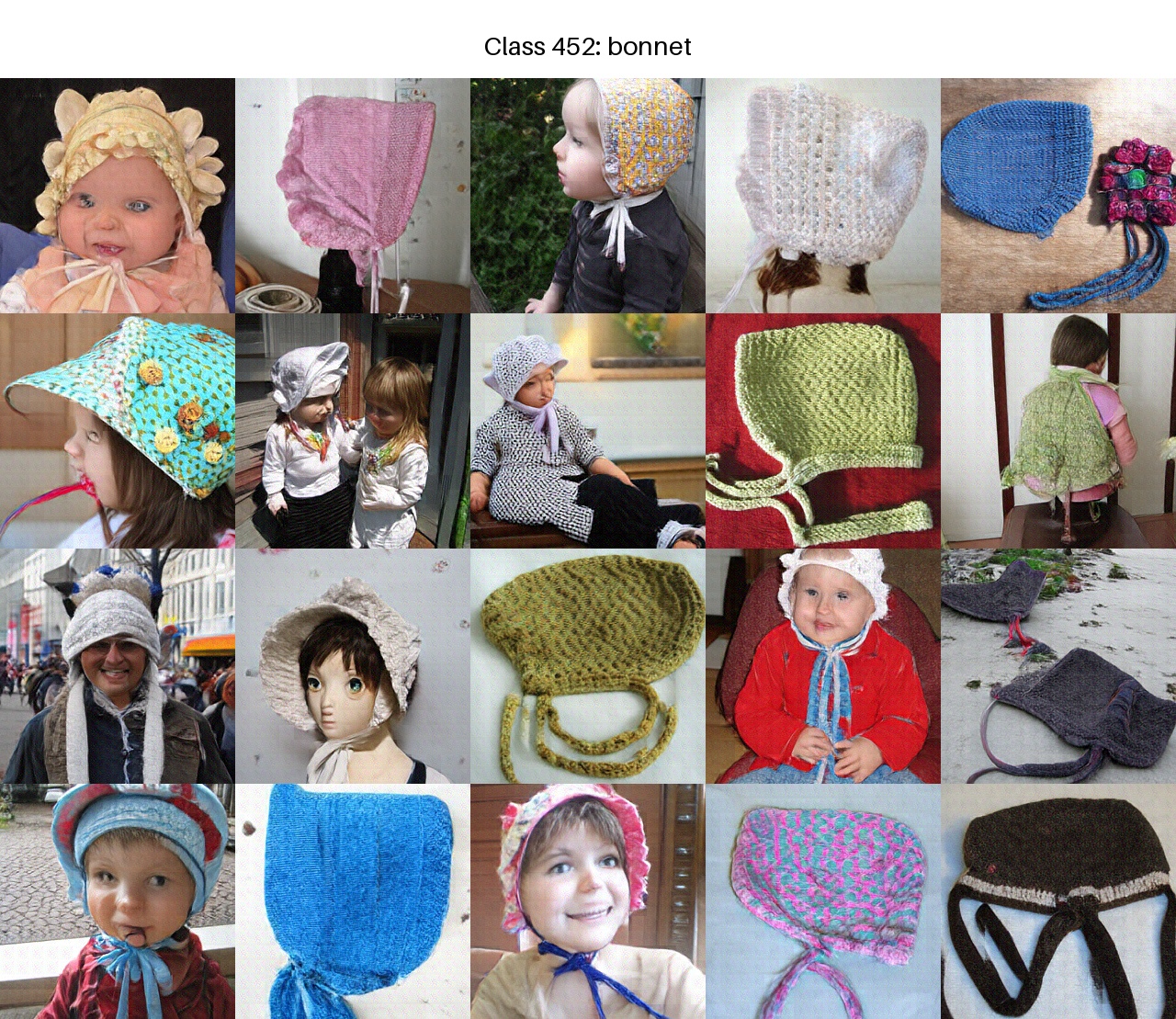}\\
\includegraphics[width=0.48\linewidth]{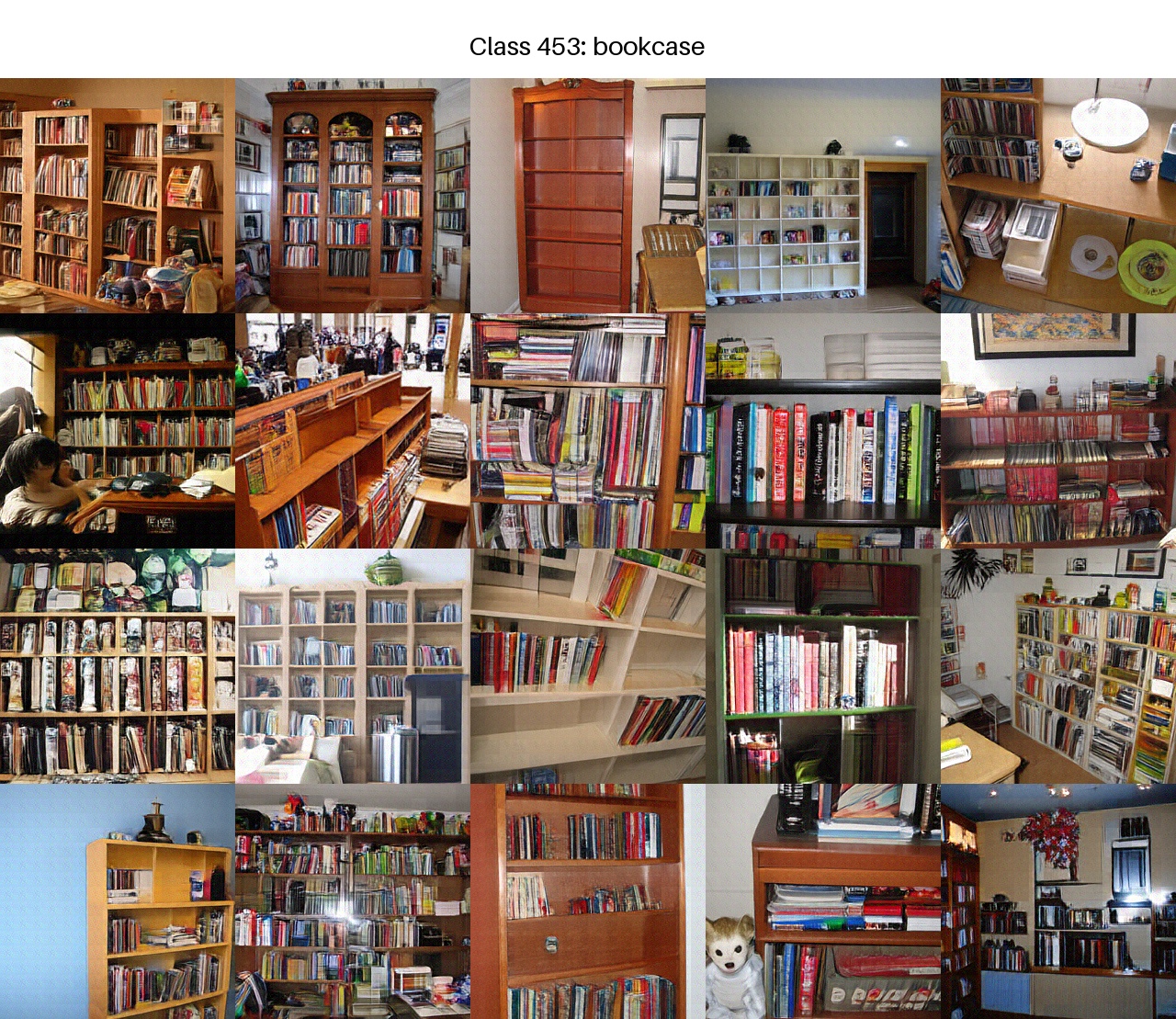}\hfill
\includegraphics[width=0.48\linewidth]{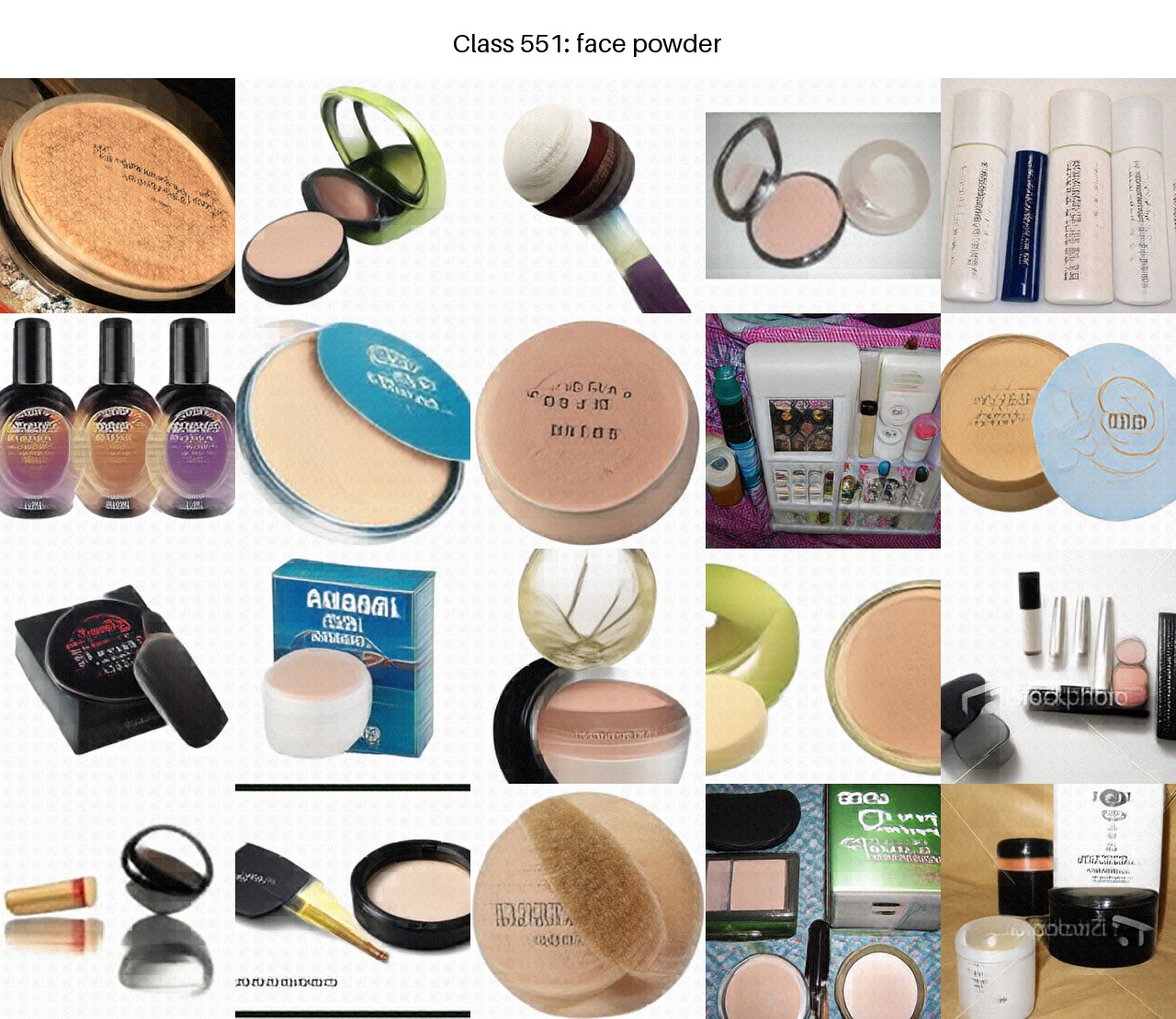}
\caption{Random, uncurated one-step ImageNet samples, classes 353--551.}
\label{fig:uncurated-onestep-2}
\end{figure}

\begin{figure}[p]
\centering
\includegraphics[width=0.48\linewidth]{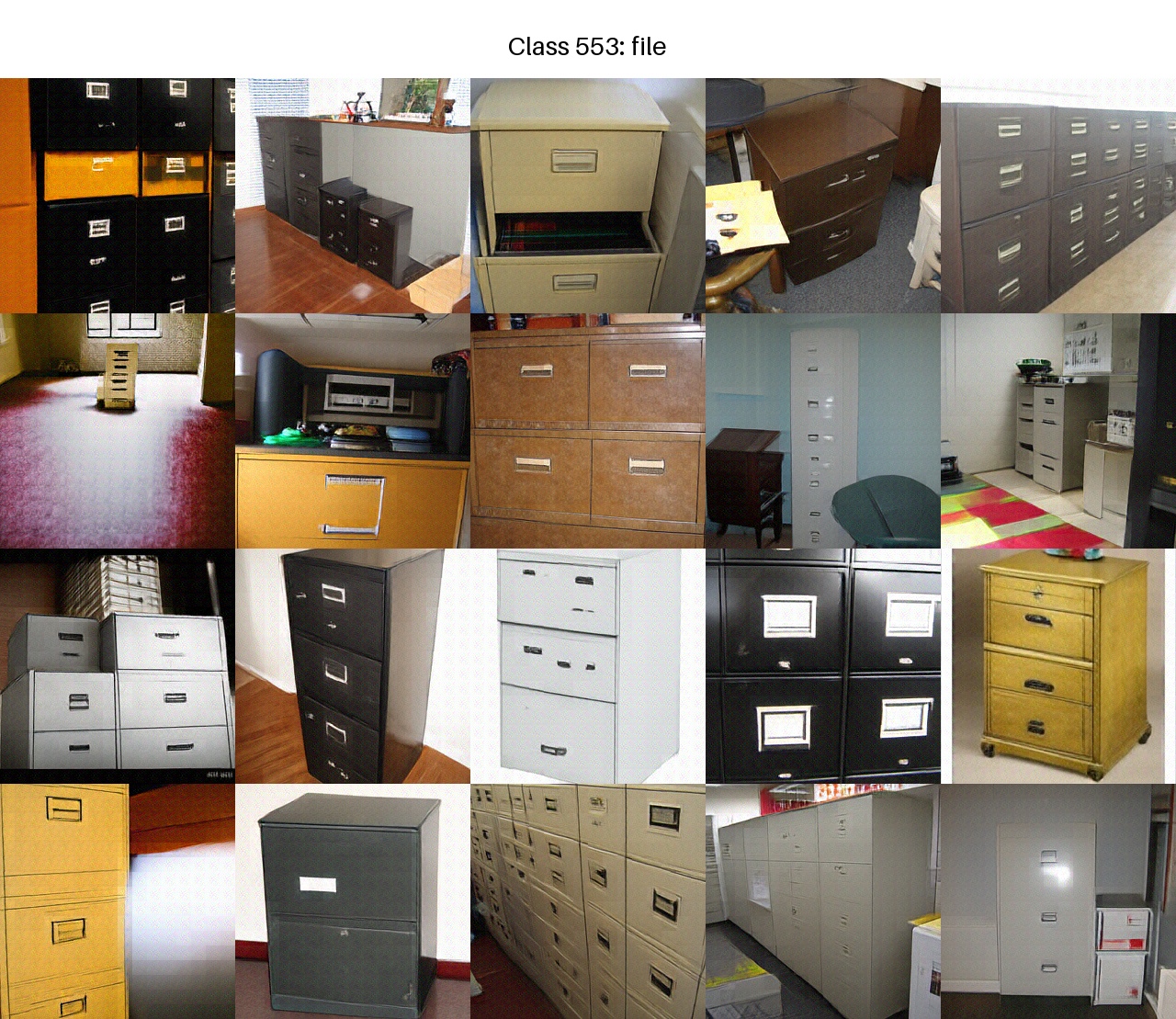}\hfill
\includegraphics[width=0.48\linewidth]{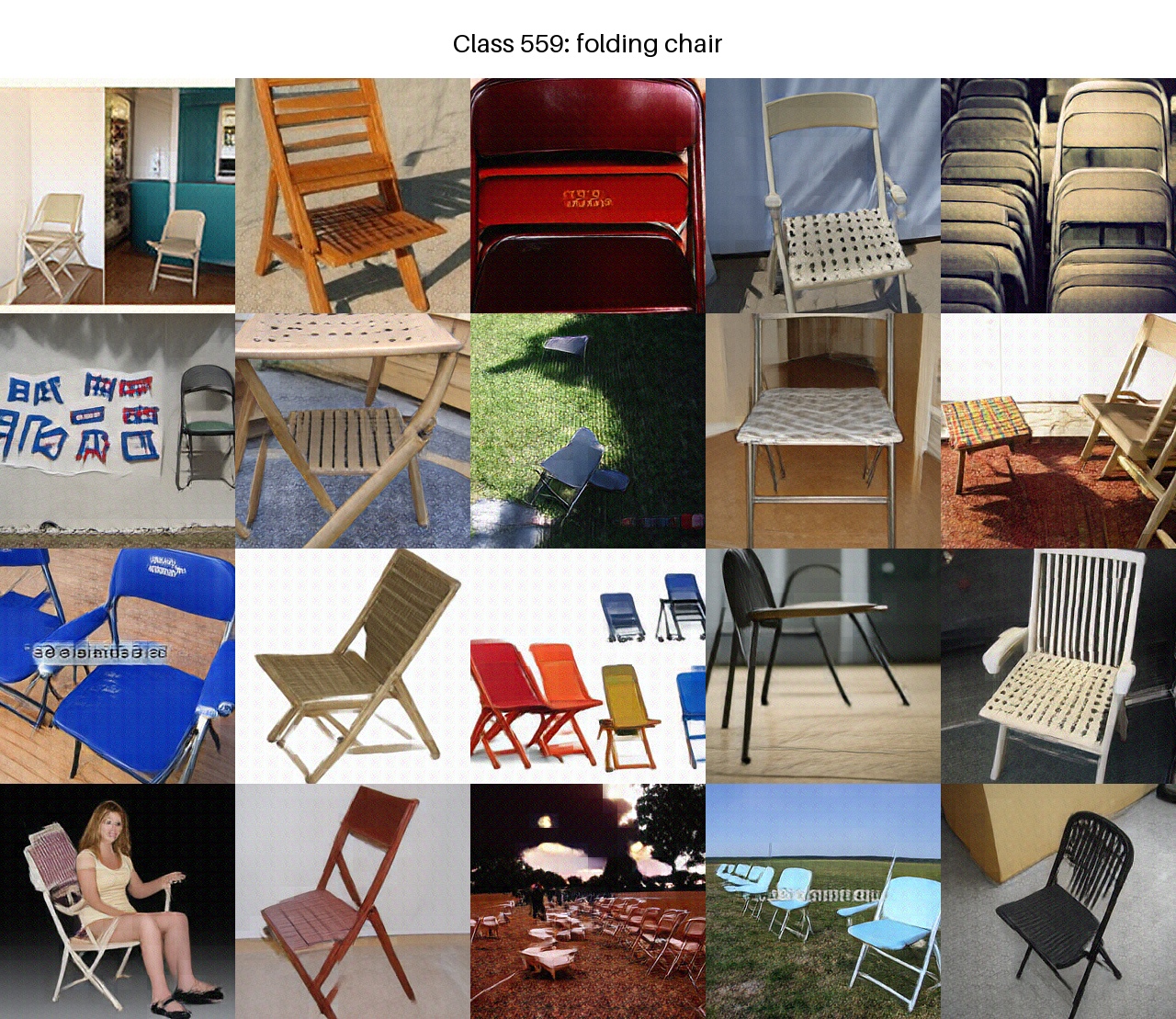}\\
\includegraphics[width=0.48\linewidth]{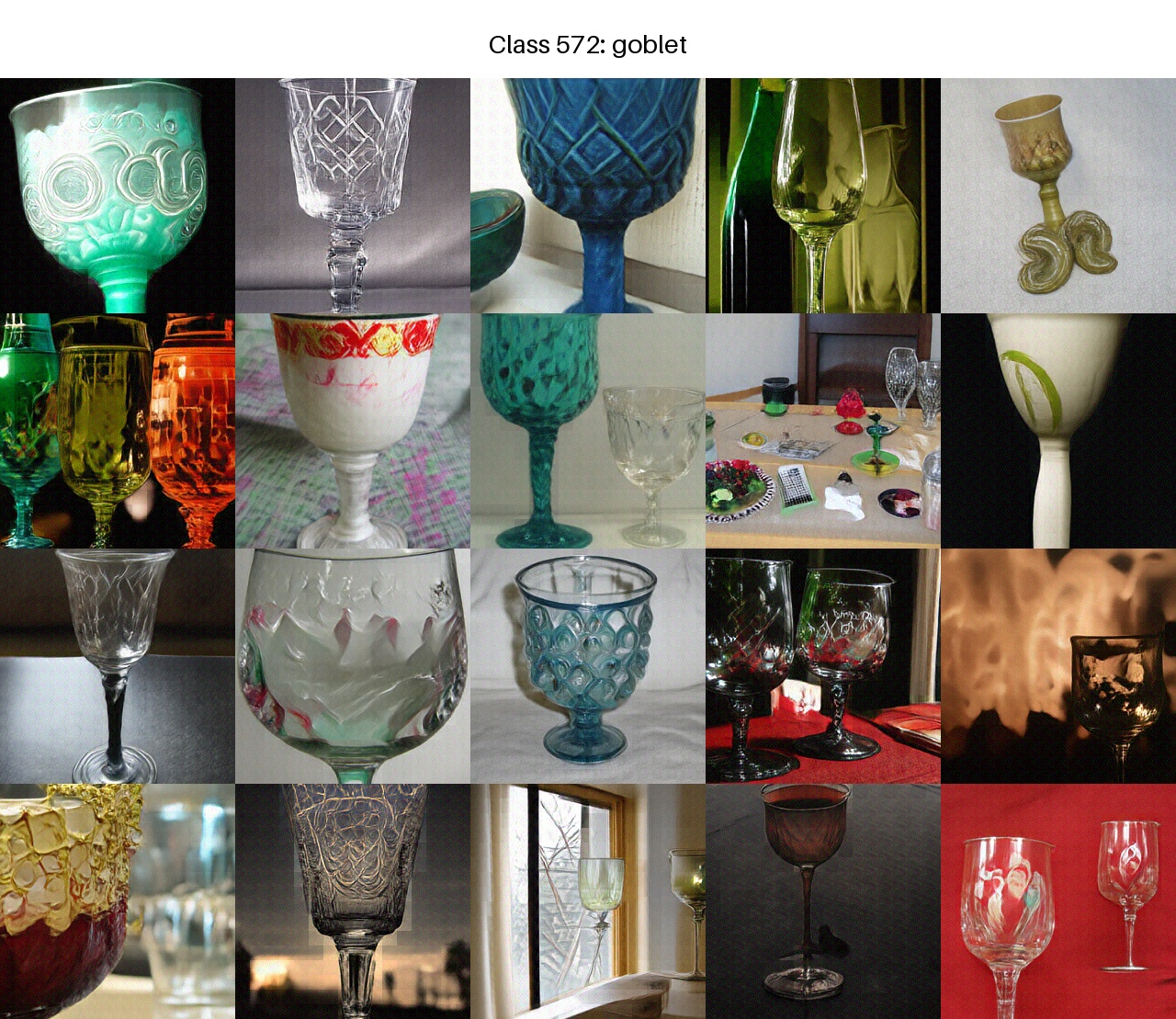}\hfill
\includegraphics[width=0.48\linewidth]{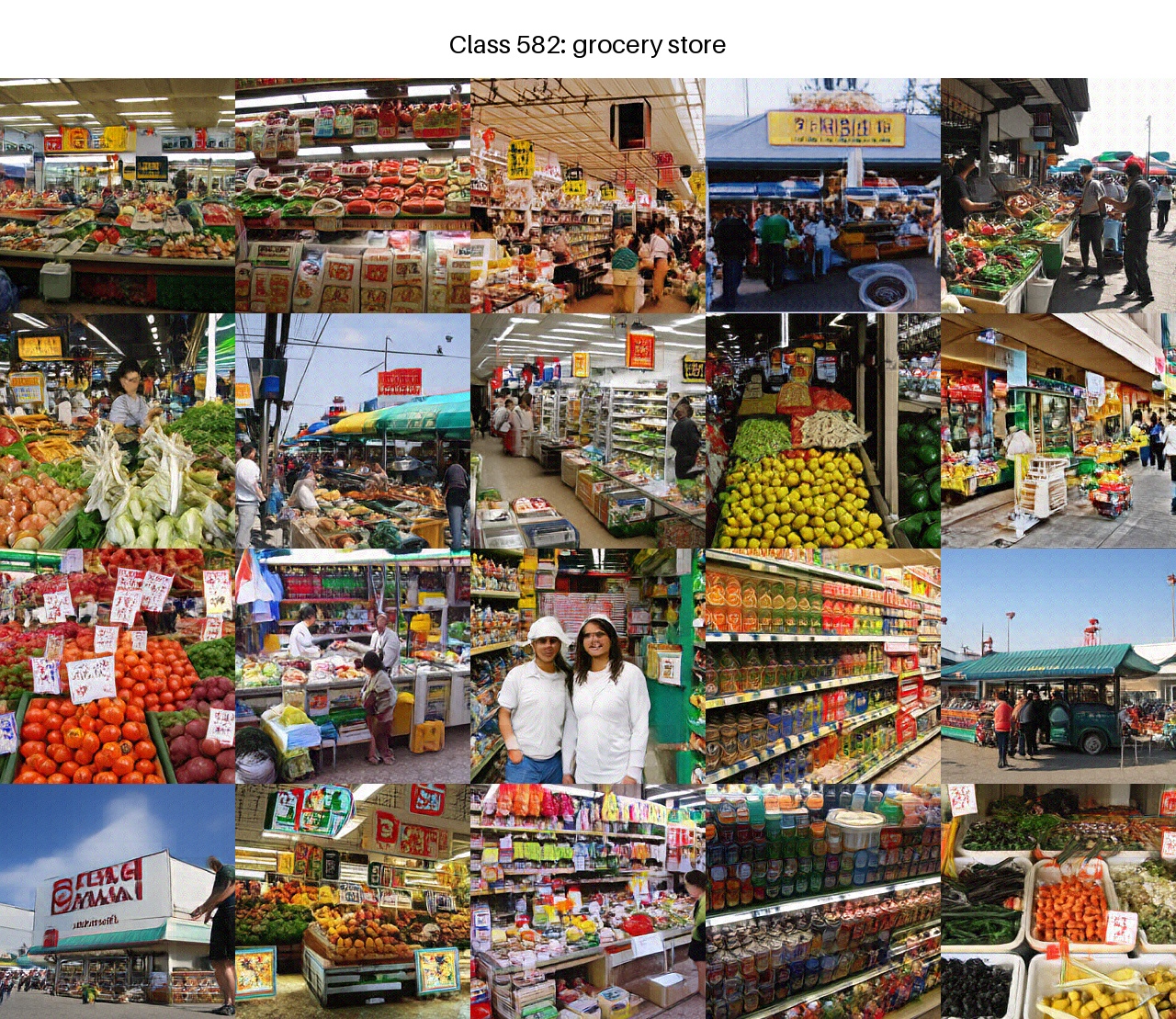}\\
\includegraphics[width=0.48\linewidth]{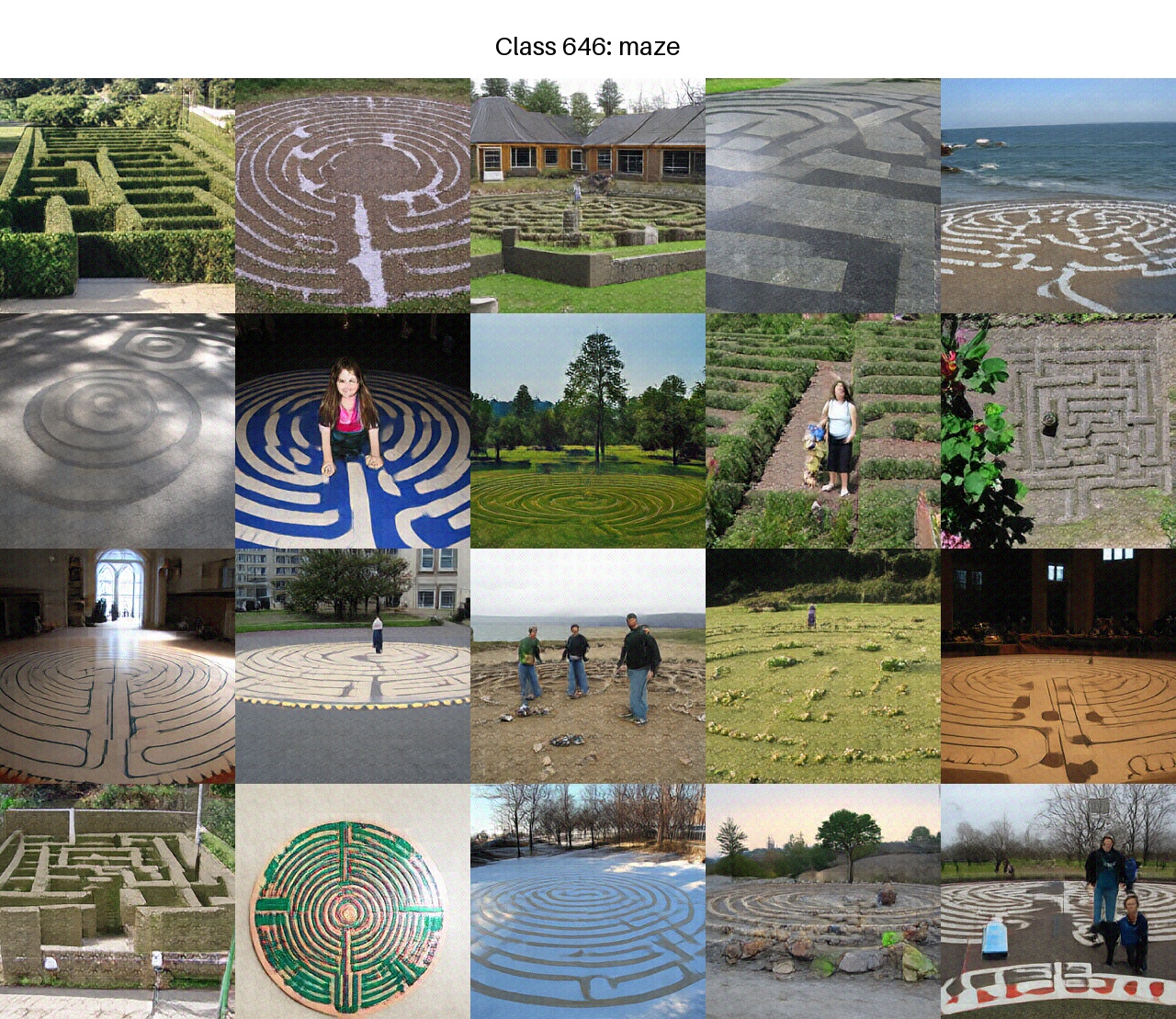}\hfill
\includegraphics[width=0.48\linewidth]{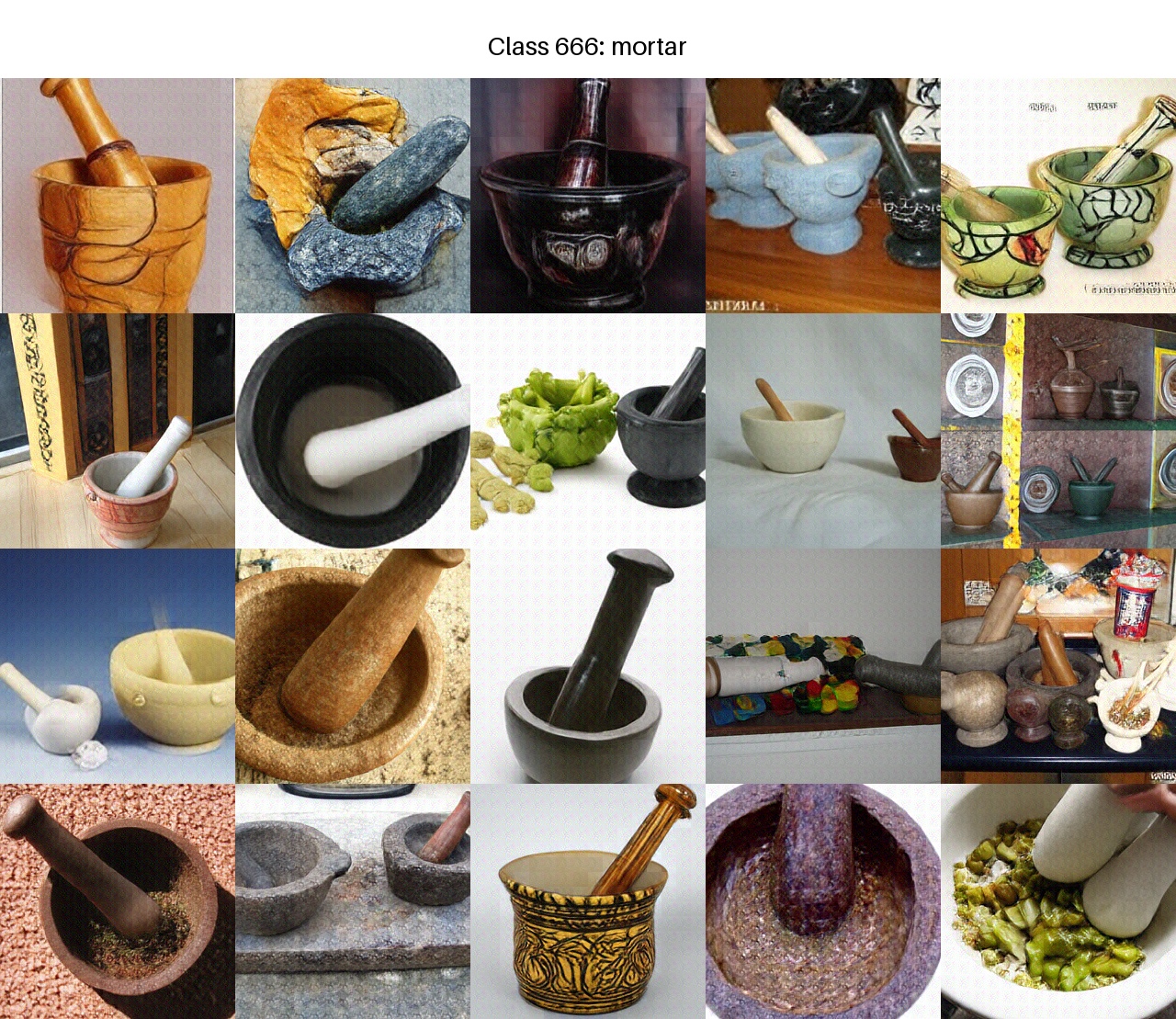}
\caption{Random, uncurated one-step ImageNet samples, classes 553--666.}
\label{fig:uncurated-onestep-3}
\end{figure}

\begin{figure}[p]
\centering
\includegraphics[width=0.48\linewidth]{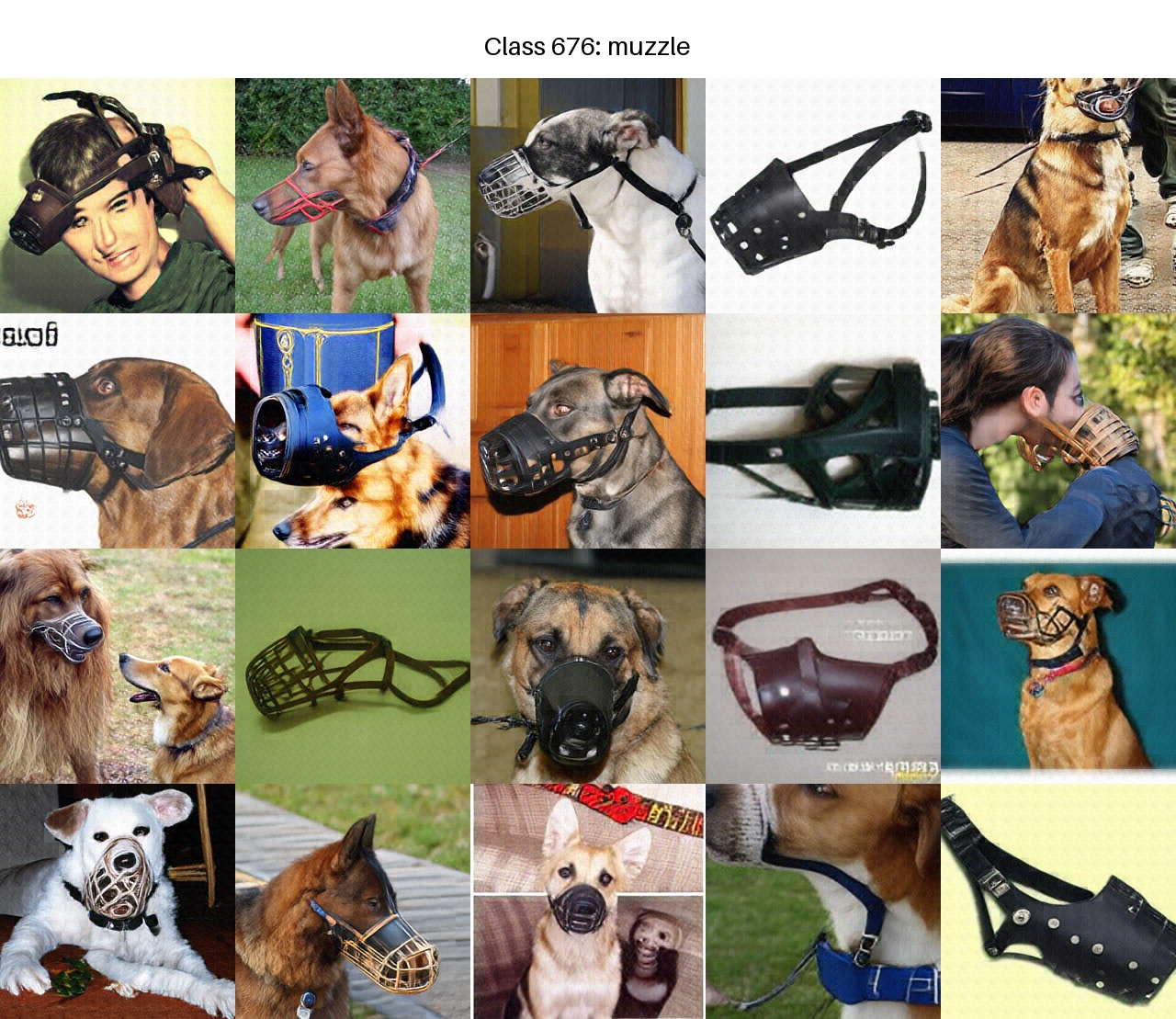}\hfill
\includegraphics[width=0.48\linewidth]{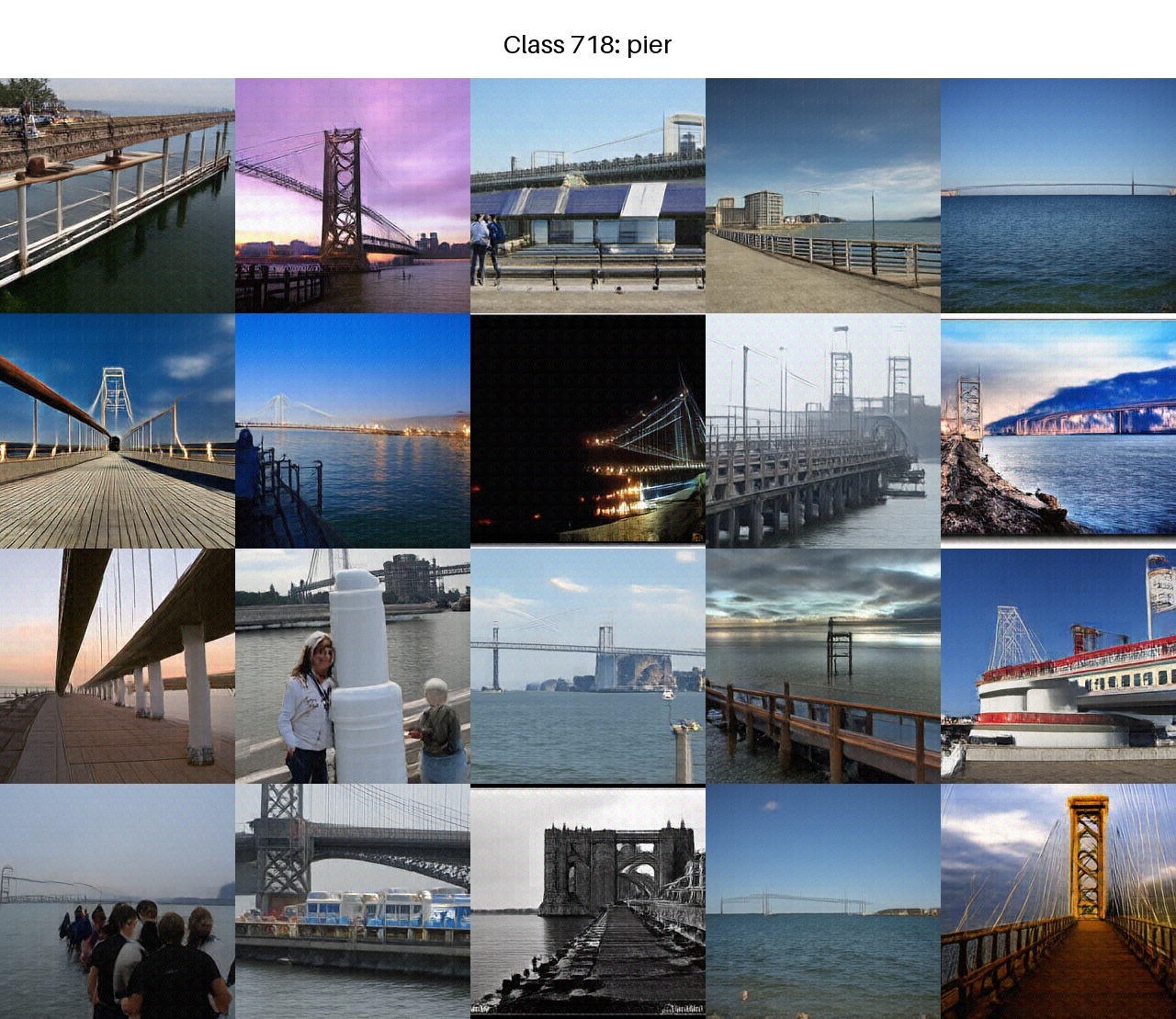}\\
\includegraphics[width=0.48\linewidth]{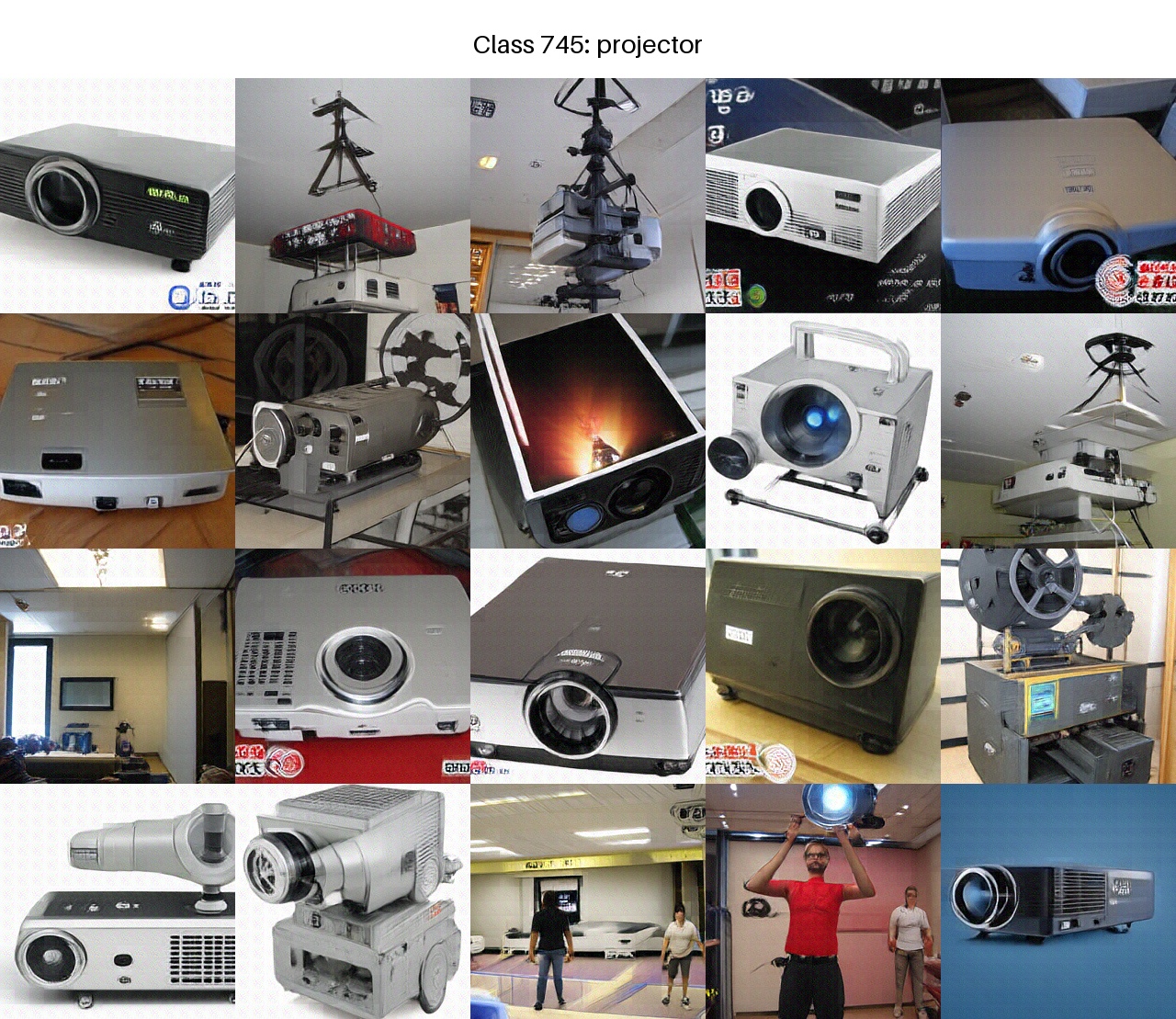}\hfill
\includegraphics[width=0.48\linewidth]{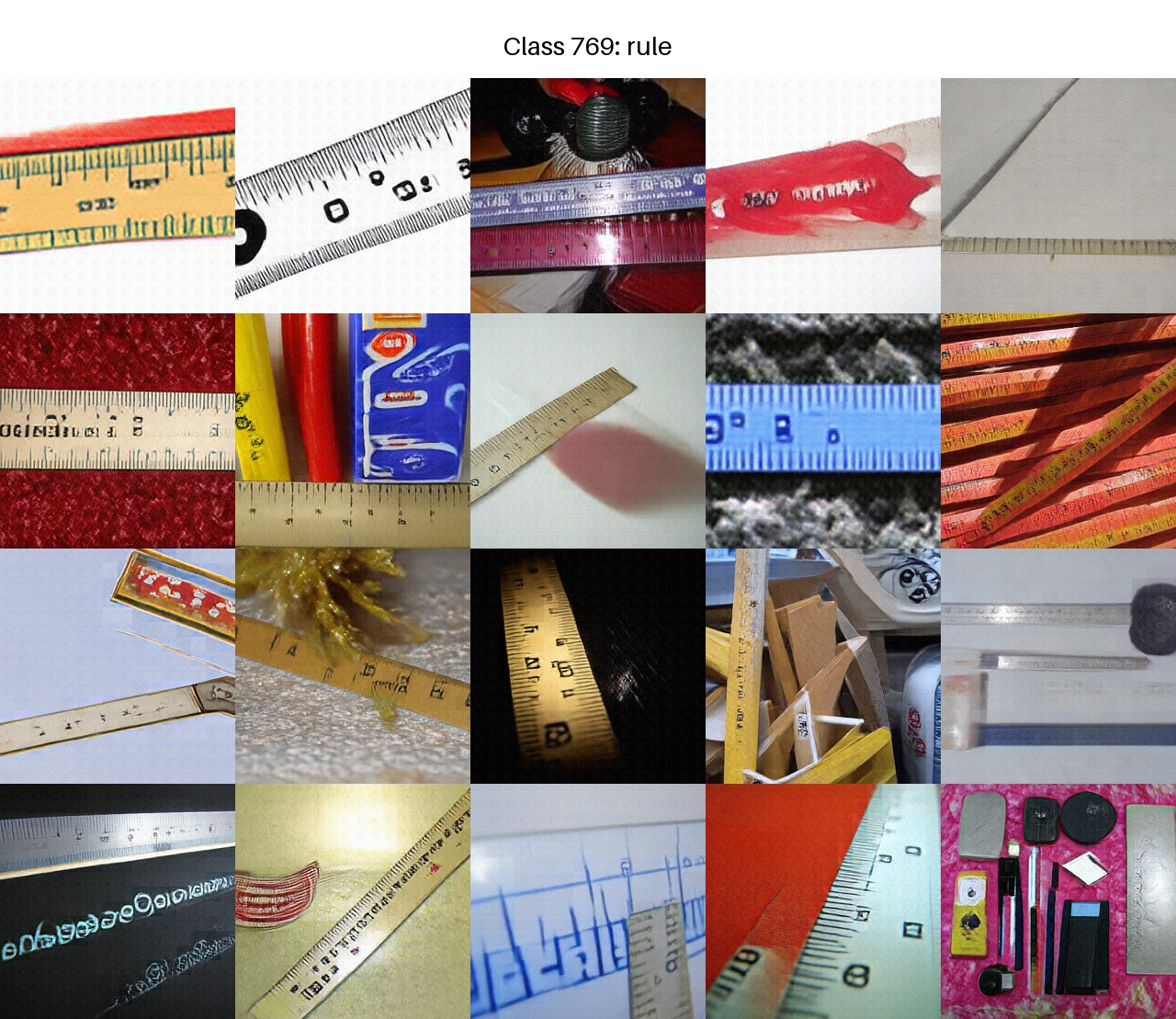}\\
\includegraphics[width=0.48\linewidth]{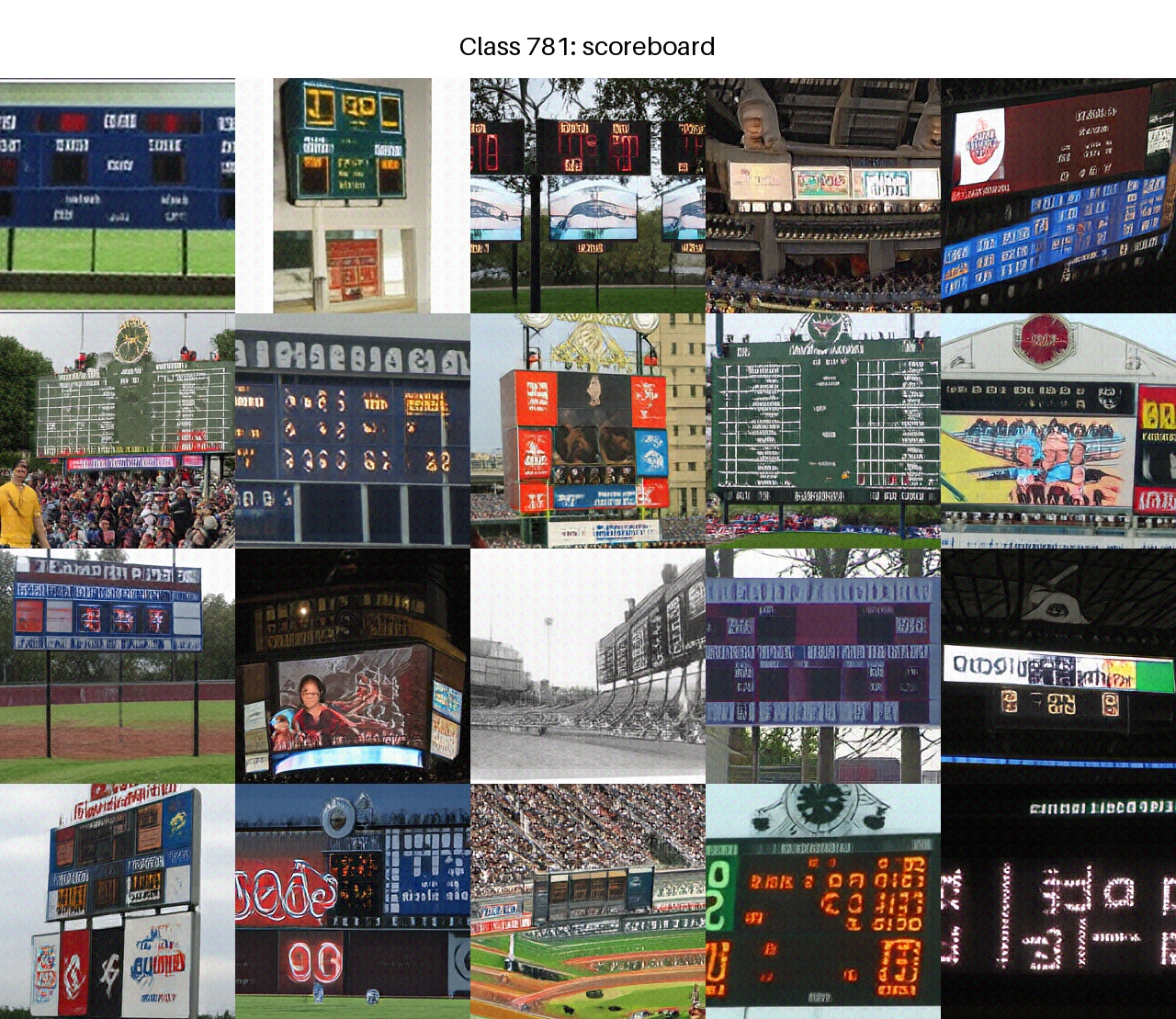}\hfill
\includegraphics[width=0.48\linewidth]{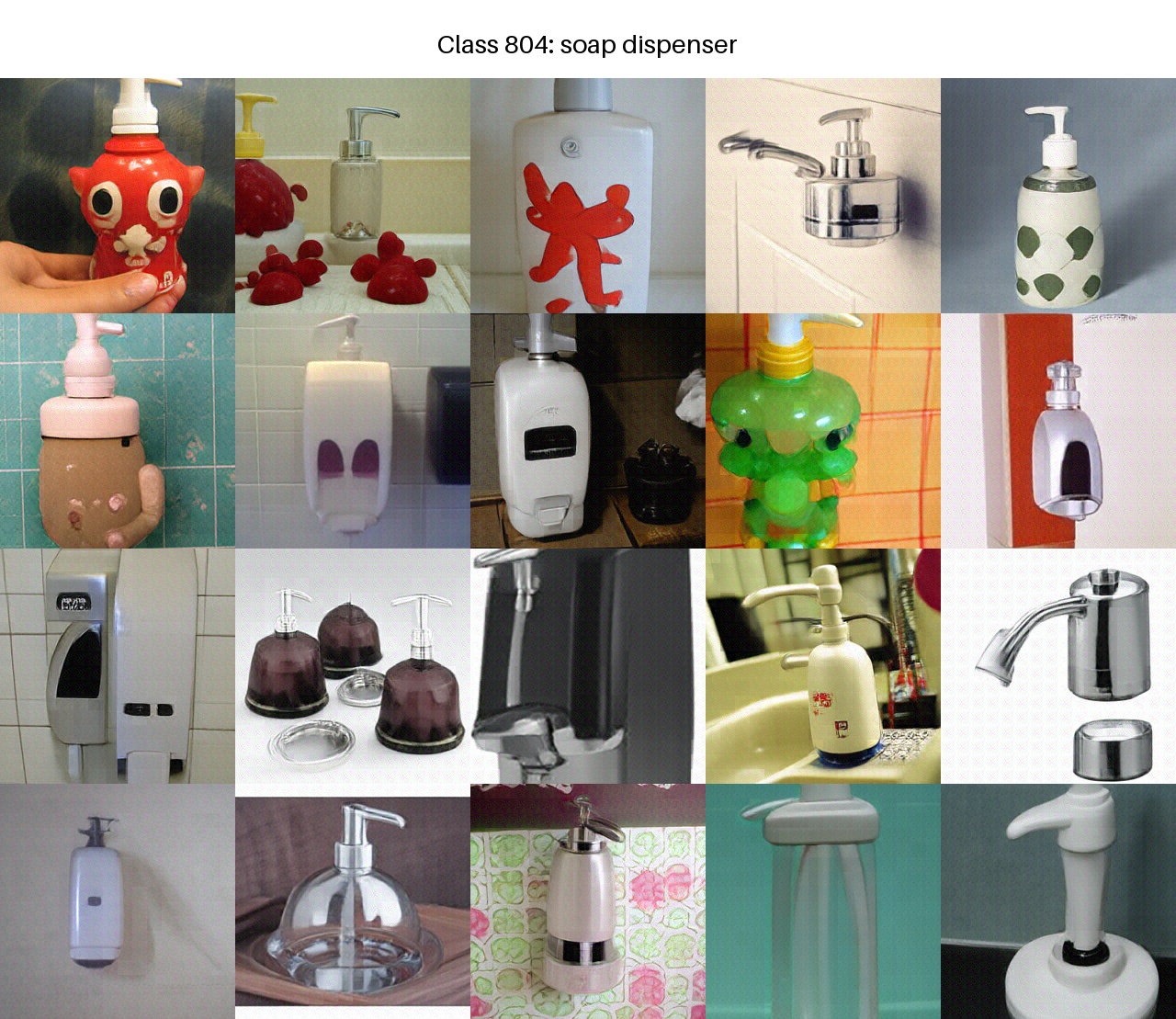}
\caption{Random, uncurated one-step ImageNet samples, classes 676--804.}
\label{fig:uncurated-onestep-4}
\end{figure}

\begin{figure}[p]
\centering
\includegraphics[width=0.48\linewidth]{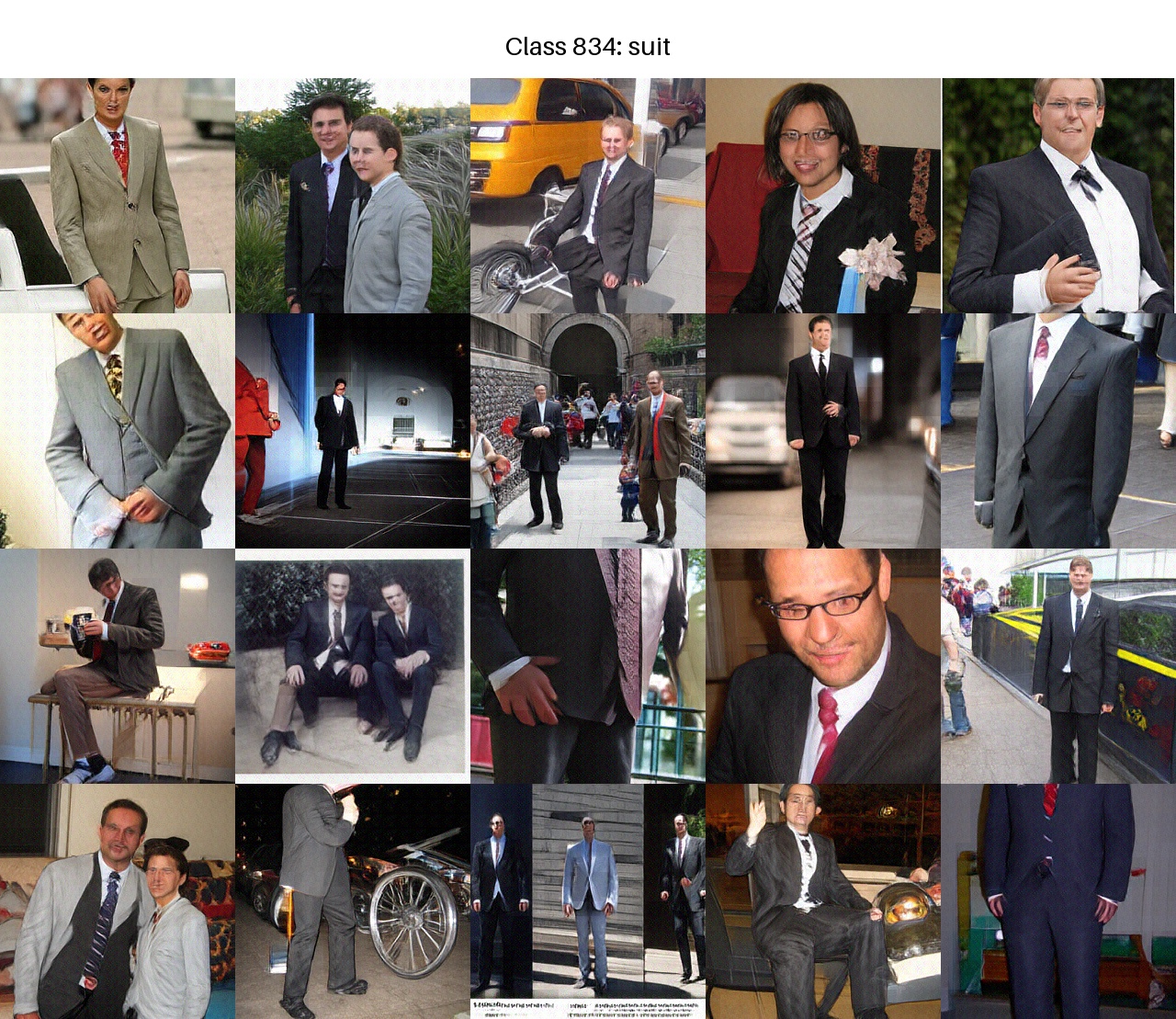}\hfill
\includegraphics[width=0.48\linewidth]{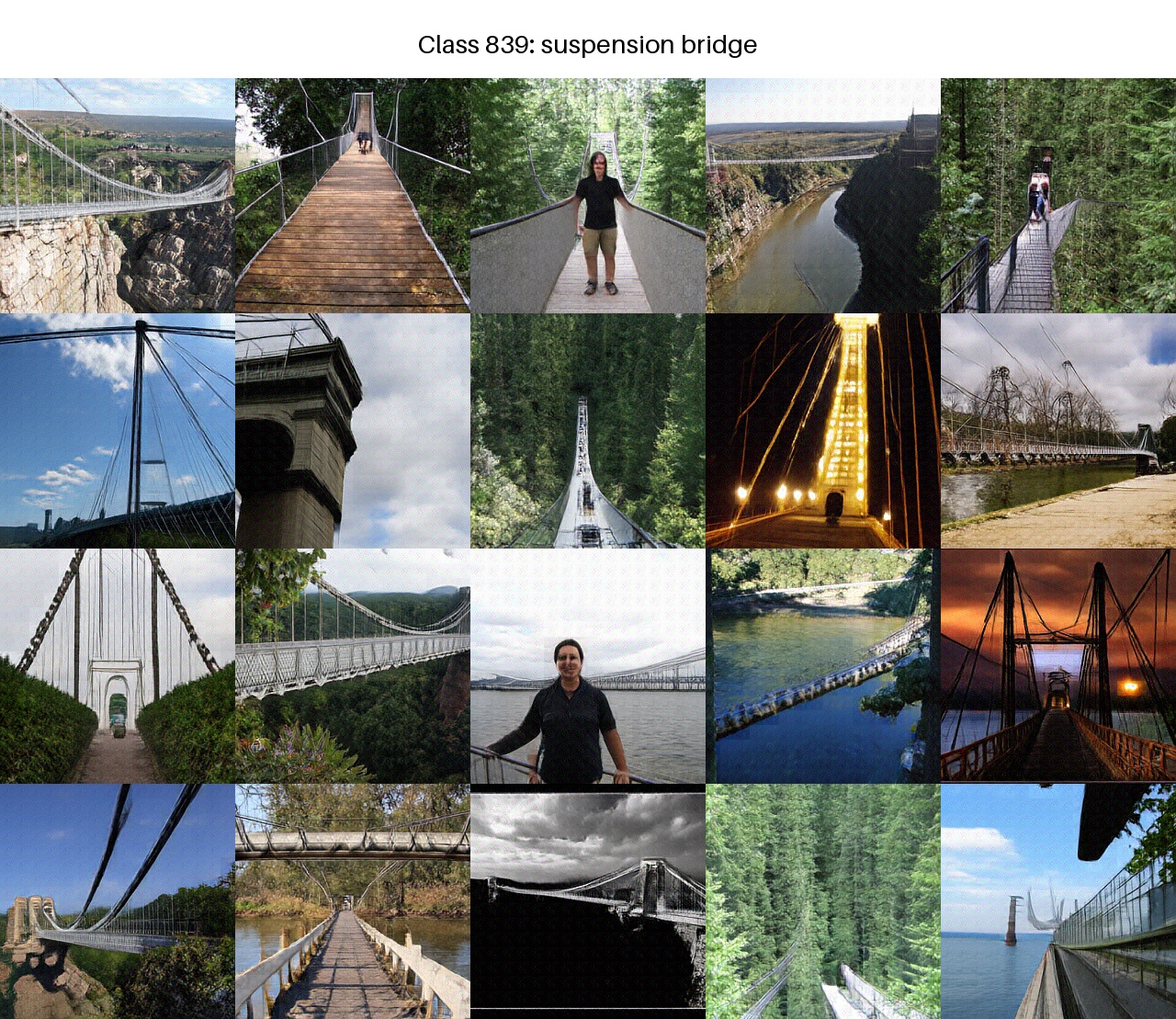}\\
\includegraphics[width=0.48\linewidth]{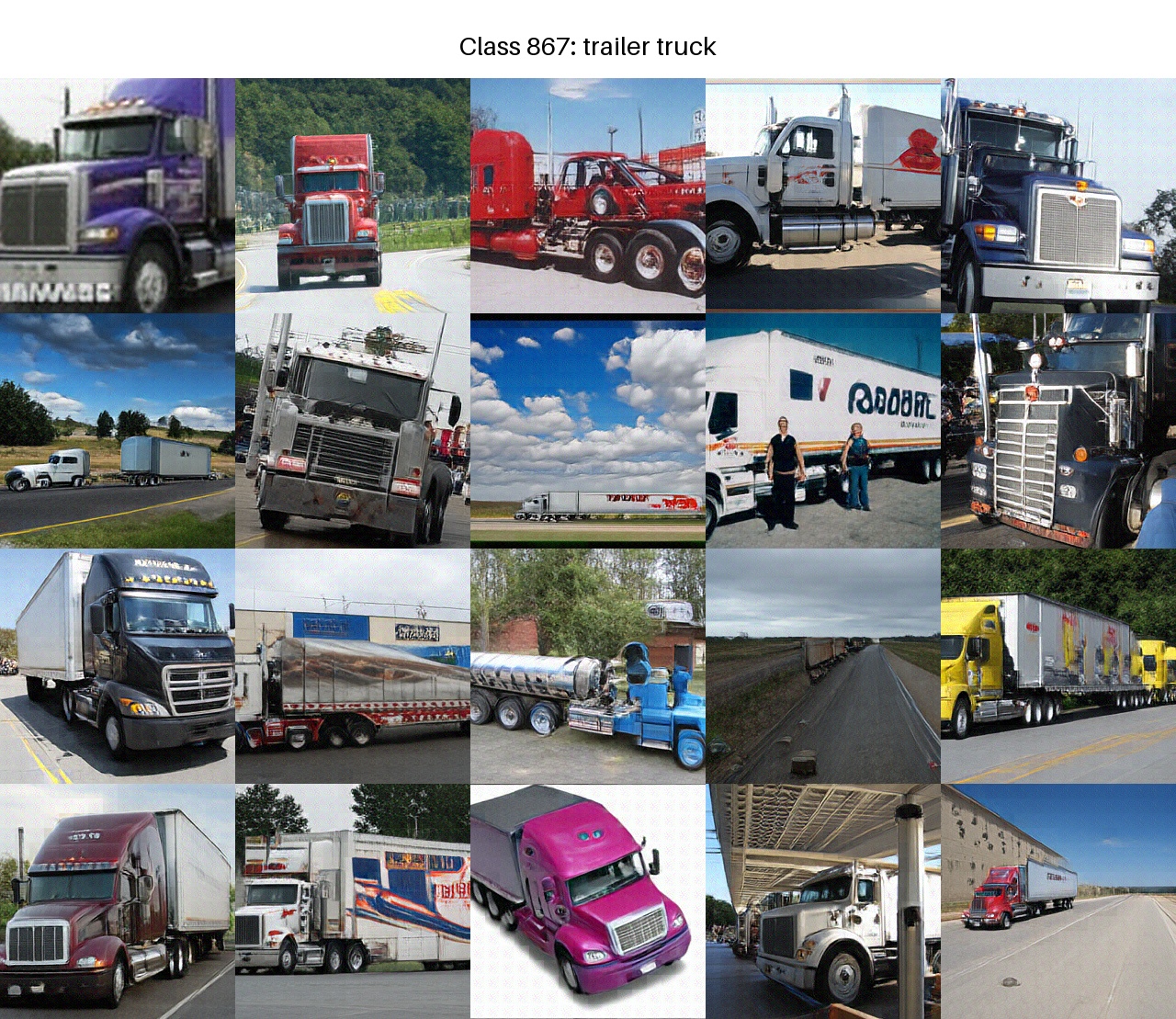}\hfill
\includegraphics[width=0.48\linewidth]{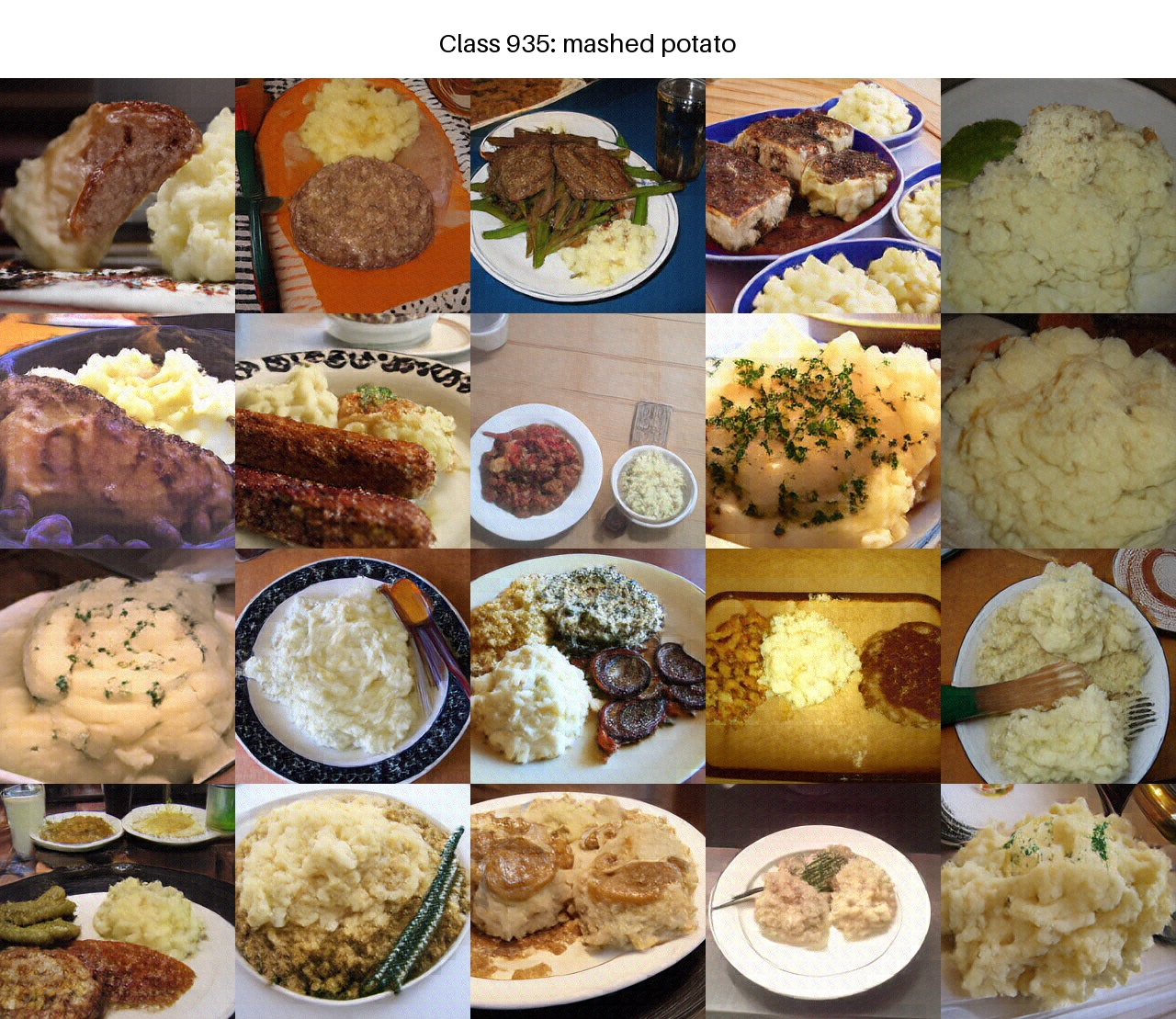}\\
\includegraphics[width=0.48\linewidth]{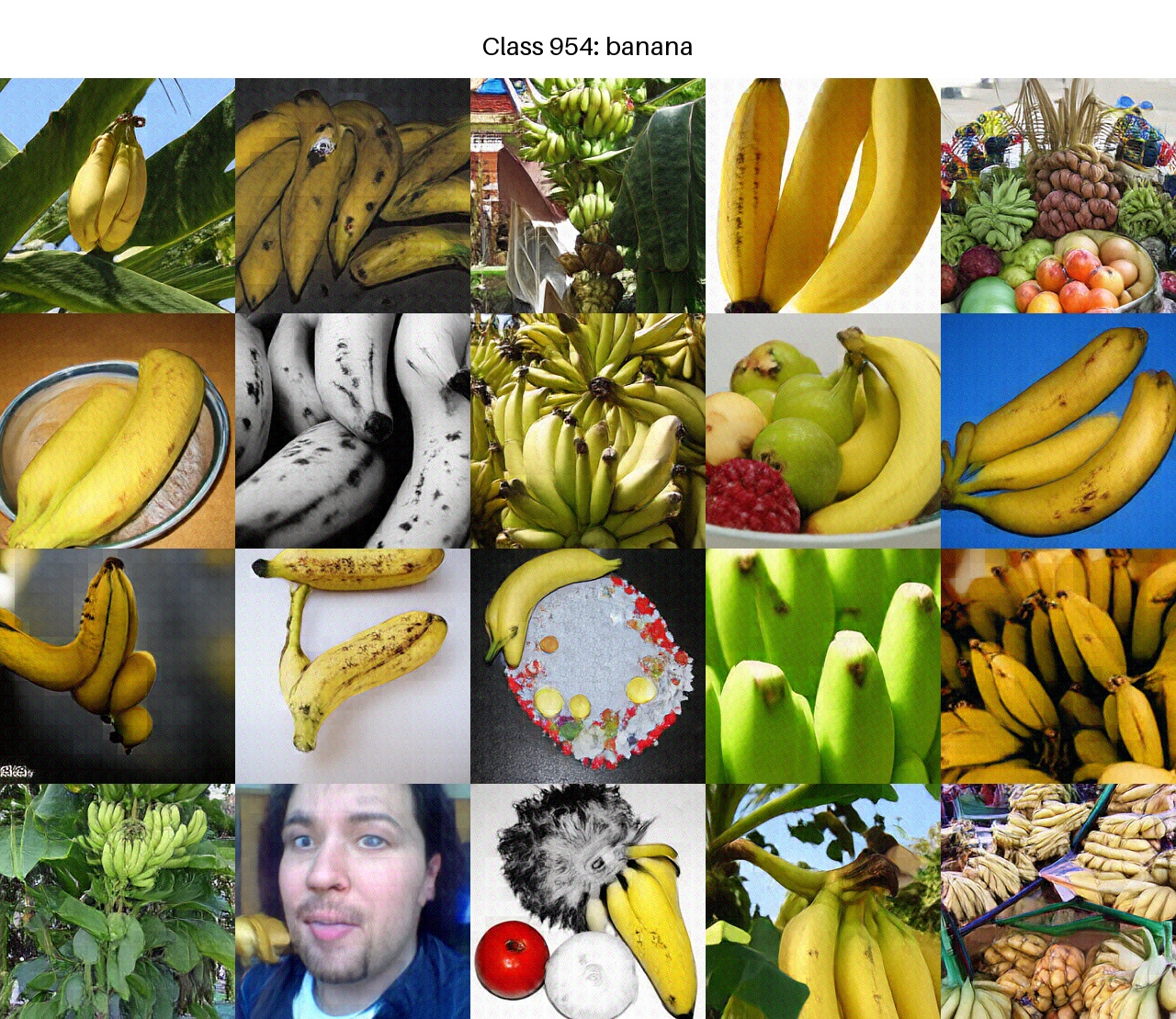}\hfill
\includegraphics[width=0.48\linewidth]{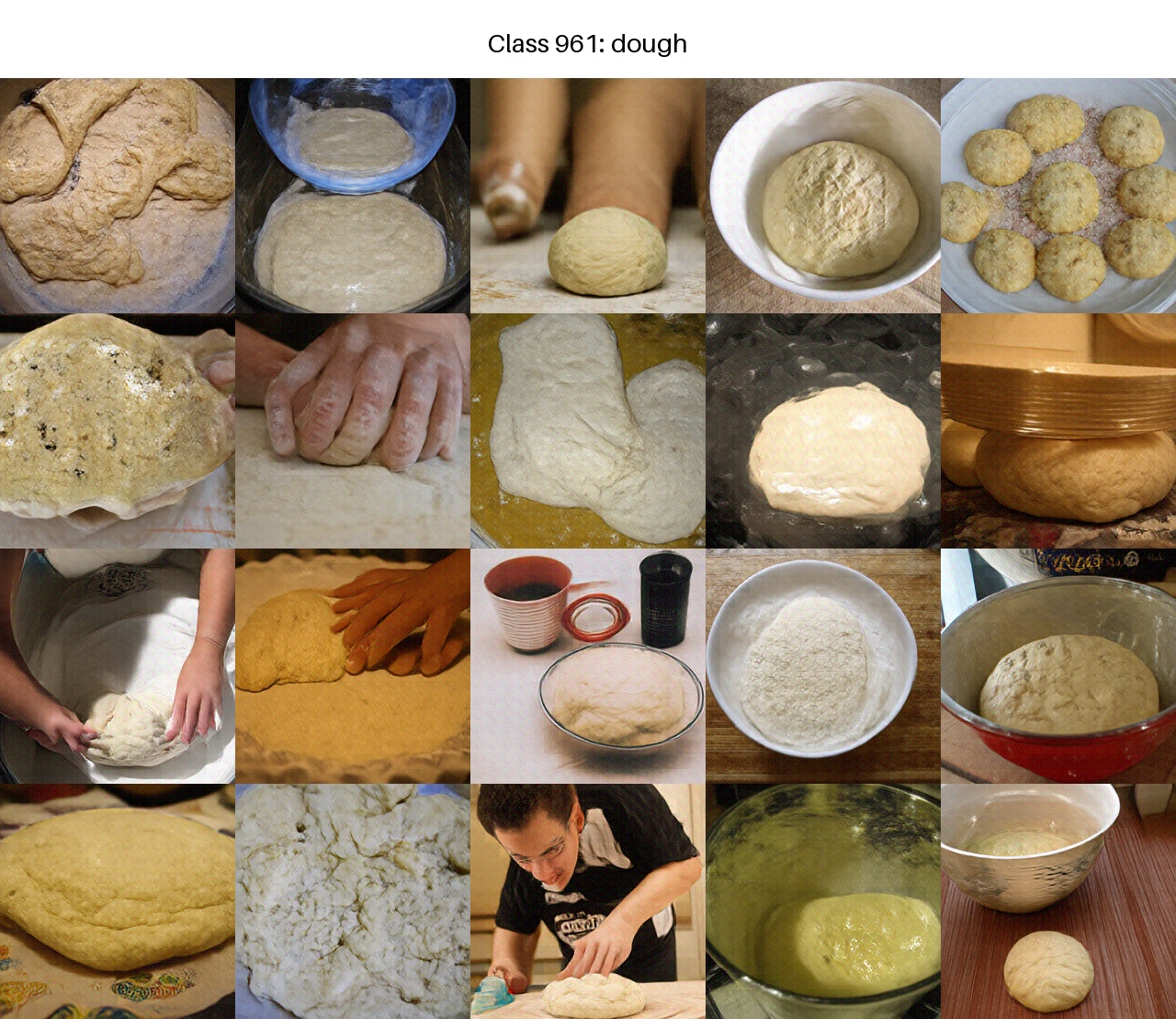}
\caption{Random, uncurated one-step ImageNet samples, classes 834--961.}
\label{fig:uncurated-onestep-5}
\end{figure}
% -----------------------------------------------------------------------------
% End inlined implementation and evaluation appendix
% -----------------------------------------------------------------------------

\end{document}